\newcommand {\maze}{\texorpdfstring{\texttt{maze2d}\@\xspace}{maze2d}}
\newcommand {\mntcar}{\texorpdfstring{\texttt{MountainCar}\@\xspace}{MountainCar}}
\newcommand {\dFOURrl}{\texorpdfstring{\texttt{d4rl}\@\xspace}{d4rl}}
\newcommand {\mqmet}{\mathfrak{Qmet}}
\definecolor{MyColorPres}{RGB}{26,82,255}
\colorlet{MyColorMaxi}{teal}
\definecolor{myorange}{RGB}{230,97,0}
\colorlet{MyColorQmet}{myorange}
\newcommand*{\presC}[1]{\textcolor{MyColorPres}{\emph{\ul{#1}}}\xspace}
\newcommand*{\maxiC}[1]{\textcolor{MyColorMaxi}{\emph{\ul{#1}}}\xspace}
\newcommand*{\qmetC}[1]{\textcolor{MyColorQmet}{\emph{\ul{#1}}}\xspace}
\newcommand*{\optC}[1]{\textcolor{red}{\emph{\ul{#1}}}\xspace}
\renewcommand*{\optC}[1]{\emph{\ul{#1}}\xspace}
\newcommand{\hide}[1]{}
\newcommand{\exul}[1]{\expandafter\ul\expandafter{#1}}
\newcommand*{\eqnameformat}[1]{%
  \textsf{#1}%
}
\@ifdefinable{\org@maketag@@@}{%
  \let\org@maketag@@@\maketag@@@
  \renewcommand*{\maketag@@@}[1]{%
    \org@maketag@@@{%
      \@ifundefined{eq@name}{#1}{%
        \begin{tabular}[t]{@{}r@{}}%
          #1\tabularnewline
          \eqnameformat{\@nameuse{eq@name}}%
        \end{tabular}%
      }%
    }%
  }%
}
\newif\ifeqname@star
\newcommand*{\eqname}{%
  \@ifstar{\eqname@startrue\eqname@}{\eqname@starfalse\eqname@}%
}
\newcommand*{\eqname@}[2][]{%
  \gdef\eq@name{#1}%
  \ifx\eq@name\@empty
  \else
    \begingroup
      \@ifundefined{GetTitleString}{%
        \gdef\@currenteqlabelname{#2}%
      }{%
        \GetTitleString{#2}%
        \global\let\@currenteqlabelname\GetTitleStringResult
      }%
      \let\@currentlabelname\@currenteqlabelname
      \label{#1}%
    \endgroup
  \fi
  \gdef\eq@name{#2}%
  \ifx\eq@name\@empty
    \global\let\eq@name\relax
  \else
    \ifeqname@star
      \gdef\eq@name{\llap{#2}}%
    \fi
  \fi
}
\@ifdefinable{\org@make@display@tag}{%
  \let\org@make@display@tag\make@display@tag
  \def\make@display@tag{%
    \@ifundefined{@currenteqlabelname}{}{%
      \let\@currentlabelname\@currenteqlabelname
    }%
    \org@make@display@tag
  }%
}
\let\eq@name\relax
\let\@currenteqlabelname\relax
\g@addto@macro\displ@y@{%
  \global\let\eq@name\relax
  \global\let\@currenteqlabelname\relax
}
\@ifdefinable{\org@math@cr@@}{%
  \let\org@math@cr@@\math@cr@@
  \def\math@cr@@[#1]{%
    \org@math@cr@@[{#1}]%
    \noalign{%
      \global\let\eq@name\relax
    }%
  }%
}
\@ifdefinable{\org@eqref}{%
  \let\org@eqref\eqref
  \renewcommand*{\eqref}[1]{%
    \begingroup
      \let\eq@name\relax
      \org@eqref{#1}%
    \endgroup
  }%
}
\g@addto@macro\equation{%
  \eqname{}%
}
\definecolor{notquitelightgray}{rgb}{0.67, 0.67, 0.67}
\def\adl@drawiv#1#2#3{%
        \hskip.5\tabcolsep
        \xleaders#3{#2.5\@tempdimb #1{1}#2.5\@tempdimb}%
                #2\z@ plus1fil minus1fil\relax
        \hskip.5\tabcolsep}
\newcommand{\cdashlinelr}[1]{%
  \noalign{\vskip\aboverulesep
           \global\let\@dashdrawstore\adl@draw
           \global\let\adl@draw\adl@drawiv}
  \cdashline{#1}
  \noalign{\global\let\adl@draw\@dashdrawstore
           \vskip\belowrulesep}}
\setlist[itemize]{leftmargin=11pt,topsep=-3pt,itemsep=-3pt,parsep=3pt}
\setlist[itemize]{leftmargin=11pt,topsep=-2.5pt,itemsep=-1pt,parsep=3.5pt}
\newcommand{\zerodisplayskips}{%
  \setlength{\abovedisplayskip}{5pt}%
  \setlength{\belowdisplayskip}{6pt}%
  \setlength{\abovedisplayshortskip}{5pt}%
  \setlength{\belowdisplayshortskip}{6pt}}
\appto{\normalsize}{\zerodisplayskips}
\appto{\small}{\zerodisplayskips}
\appto{\footnotesize}{\zerodisplayskips}
\icmltitlerunning{Quasimetric RL}
\renewcommand{\postsubmission}[2]{#2}
\begin{document}

\twocolumn[
% \icmltitle{Quasimetric Learning as Optimal Goal-Conditional Reinforcement Learning}
\icmltitle{Optimal Goal-Reaching Reinforcement Learning via Quasimetric Learning}

% It is OKAY to include author information, even for blind
% submissions: the style file will automatically remove it for you
% unless you've provided the [accepted] option to the icml2022
% package.

% List of affiliations: The first argument should be a (short)
% identifier you will use later to specify author affiliations
% Academic affiliations should list Department, University, City, Region, Country
% Industry affiliations should list Company, City, Region, Country

% You can specify symbols, otherwise they are numbered in order.
% Ideally, you should not use this facility. Affiliations will be numbered
% in order of appearance and this is the preferred way.
\icmlsetsymbol{equal}{*}

\begin{icmlauthorlist}
\icmlauthor{Tongzhou Wang}{mit}
\icmlauthor{Antonio Torralba}{mit}
\icmlauthor{Phillip Isola}{mit}
\icmlauthor{Amy Zhang}{utaustin,meta}
%\icmlauthor{}{sch}
%\icmlauthor{}{sch}
\end{icmlauthorlist}

% \icmlaffiliation{mit}{Massachusetts Institute of Technology}
% \icmlaffiliation{meta}{Meta AI}
% \icmlaffiliation{utaustin}{University of Texas at Austin}
\icmlaffiliation{mit}{MIT}
\icmlaffiliation{meta}{Meta AI}
\icmlaffiliation{utaustin}{UT Austin}

% \icmlcorrespondingauthor{Firstname1 Lastname1}{first1.last1@xxx.edu}
% \icmlcorrespondingauthor{Firstname2 Lastname2}{first2.last2@www.uk}

% \icmlaffiliation{yyy}{Department of XXX, University of YYY, Location, Country}
% \icmlaffiliation{comp}{Company Name, Location, Country}
% \icmlaffiliation{sch}{School of ZZZ, Institute of WWW, Location, Country}
% \icmlaffiliation{mit}{MIT CSAIL}
% \icmlaffiliation{uw}{University of Washington}
% \icmlaffiliation{berkeley}{UC Berkeley}
% \icmlaffiliation{metaai}{Meta AI}

\icmlcorrespondingauthor{Tongzhou Wang}{tongzhou@mit.edu}
% \icmlcorrespondingauthor{Firstname2 Lastname2}{first2.last2@www.uk}

% You may provide any keywords that you
% find helpful for describing your paper; these are used to populate
% the "keywords" metadata in the PDF but will not be shown in the document
\icmlkeywords{Machine Learning, Reinforcement Learning, Quasimetrics, Geometry}

\vskip 0.3in
]

% this must go after the closing bracket ] following \twocolumn[ ...

% This command actually creates the footnote in the first column
% listing the affiliations and the copyright notice.
% The command takes one argument, which is text to display at the start of the footnote.
% The \icmlEqualContribution command is standard text for equal contribution.
% Remove it (just {}) if you do not need this facility.

\printAffiliationsAndNotice{}  % leave blank if no need to mention equal contribution
% \printAffiliationsAndNotice{\icmlEqualContribution} % otherwise use the standard text.

\begin{abstract}
%!TEX root = ../main.tex

% \vspace*{-1.5pt}

% The ability to reason with high-level abstractions is critical to general intelligence. Humans learn and discover such abstractions as we interact with other people and the physical world. How can artificial agents do the same?  What kind of information can agents safely discard in such abstractions?  In this work, we categorize different types of information out in the wild, providing a framework that allows easy understanding of information removed by various prior work on representation learning in reinforcement learning (RL). We propose a novel approach for learning a world model that explicitly factors out certain noisy distractors. Extensive experiments on variants of DeepMind Control Suite and \robodesk demonstrate superior performance of our learned world model over both raw observation and prior works, across  policy optimization control tasks as well as the non-control task of joint position regression.

In goal-reaching reinforcement learning (RL), the optimal value function has a particular geometry, called \emph{quasimetric} structure. This paper introduces Quasimetric Reinforcement Learning (QRL), a new RL method that utilizes \qmet models to learn \optC{optimal} value functions. Distinct from prior approaches, the QRL objective is specifically designed for \qmets, and provides strong theoretical recovery guarantees. Empirically, we conduct thorough analyses on a discretized \mntcar environment, identifying properties of QRL and its advantages over alternatives. On offline and online goal-reaching benchmarks, QRL also demonstrates improved sample efficiency and performance, across both state-based and image-based observations. 

\vspace*{-0.5pt}
\begingroup%
\fontsize{8.5pt}{10pt}\selectfont%
\hspace{-21pt}
\noindent\begin{tabular}{@{}lr@{}}
\textbf{\fontsize{9.5pt}{10pt}\selectfont Project Page:}\hspace{0.2em} & \hspace{-1.025em}\href{https://tongzhouwang.info/quasimetric_rl/}{\texttt{tongzhouwang.info/quasimetric\_rl}}\\[0.65ex]
{\textbf{\fontsize{9.5pt}{10pt}\selectfont Code:}} & \hspace{-3.6em}%
{
\href{https://github.com/quasimetric-learning/quasimetric-rl/}%
{\texttt{github.com/quasimetric-learning/quasimetric-rl}}%
}
\\
\end{tabular}%
\endgroup%
\vspace*{-6pt}

% \begingroup%
% \fontsize{8.25pt}{10pt}\selectfont%
% \hspace{-17pt}
% \noindent\begin{tabular}{@{}lr@{}}
% \textbf{\fontsize{8.75pt}{10pt}\selectfont Project Page:}\hspace{0.2em} & \hspace{-1em}\href{https://ssnl.github.io/quasimetric_rl/}{\texttt{ssnl.github.io/quasimetric\_rl}} \\[0.425ex]
% {\textbf{\fontsize{8.75pt}{10pt}\selectfont Code:}} & \hspace{-3.5em}%
% {
% \href{https://github.com/quasimetric-learning/quasimetric_rl/}%
% {\texttt{github.com/quasimetric-learning/quasimetric\_rl}}%
% }
% \\
% \end{tabular}%
% \endgroup%
% \vspace*{-7.5pt}

% \vspace*{-5.5pt}
% \phil{say something about also outperforming in the single-goal setting?}

% ChatGPT's version:
% ``Modeling decision-making problems through dynamic programming often involves using the cost-to-go function, also known as the value function. This function allows for breaking down complex decisions into smaller subproblems and collapsing them into a single node with a summarized cost. However, in the multi-task setting, where the value function models the cost-to-go to a set of goals, additional structure emerges. The true optimal value function in this setting is always a quasimetric function. Despite this, previous RL algorithms have not exploited this fact, but our new algorithm aims to change that by constraining the value function search to the space of quasimetrics and redesigning the entire RL optimization process for more efficiency and stronger guarantees."
\end{abstract}

% \input{text/OUTLINE_A}
%!TEX root = ../main.tex
\section{Introduction}\label{sec:introduction}
% \vspace{-2.5pt}

Modern decision-making problems often involve dynamic programming on the cost-to-go function, also known as the value function. This function allows for bootstrapping, where a complicated decision is broken up into a series of subproblems. Once a subproblem is solved, its subgraph can be collapsed into a single node whose cost is summarized by the value function. This approach appears in nearly all contemporary RL and planning algorithms. 
% \vspace{-1pt}

In deep RL, value functions are modeled with general neural nets, which are universal function approximators. Further, most RL algorithms focus on optimizing toward a single goal. In this setting, the value function $V^*(s)$ reports the (optimal) cost-to-go to achieve that single goal from state $s \in \mathcal{S}$. It is known that $V^*$ can be any function $V^*\colon \mathcal{S} \rightarrow \R$, that is, for any $V^*\colon \mathcal{S} \rightarrow \R$, there exists a Markov Decision Process (MDP) for which that $V^*$ is the desired optimal value function. 
% This justifies using very flexible neural networks to parametrize $V^*$ in learning.

However, an additional structure emerges when we switch 
% instead 
to the multi-task setting, where the (goal-conditioned) value function $V^*(s;g) \colon \mathcal{S} \times \mathcal{S} \rightarrow \R$ is the cost-to-go to a given goal state $g$ (\Cref{fig:structure-multi-goal}). In this case, the optimal value function, for \emph{any} MDP, is always a quasimetric function~\cite{wang2022learning,sontag1995abstract,tian2020model,liu2022metric}, which is a generalization of metric functions to allow asymmetry while still respecting the triangle inequality. 

Given this structure, it is natural to constrain value function search to the space of quasimetrics. This approach searches within a much smaller subset of the space of all functions $\mathcal{S} \times \mathcal{S} \rightarrow \R$, ensuring that the true value function is guaranteed to be present within this subspace. Recent advancements in differentiable parametric quasimetric models \citep{wang2022iqe,pitis2020inductive} have already enabled a number of studies \citep{liu2022metric,wang2022learning} to explore the use of these models in standard RL algorithms, resulting in improved performance in some cases.

However, traditional RL algorithms (such as Q-learning \citep{watkins1989learning}) were designed for large unconstrained function spaces, and their performance may severely degrade with restricted spaces \citep{wang2020statistical,wang2021instabilities}. Instead of constraining the search space, they encourage quasimetric properties via the objective function. For example, the Bellman update partly enforces the triangle inequality on the current state, next state, and target goal. With the advent of differentiable parametric quasimetric models, these properties come for free with the architecture, and we aim to design a new RL algorithm specifically geared towards learning quasimetric value functions.
% and their demonstrated effectiveness in RL, it is newly possible to design a new RL algorithm that is specifically geared towards learning quasimetric value functions.

In this work, we propose \emph{Quasimetric Reinforcement Learning} (QRL). QRL is in the family of geometric approaches to value function learning, which model the value function as some distance metric, or, in our case, a quasimetric. Obtaining local distance estimates is easy because the cost of a single transition is by definition given by the reward function, and can be learned via regression towards observed rewards. 
However, capturing global relations is hard. This is a problem studied in many fields such as metric learning \citep{roweis2000nonlinear,tenenbaum2000global}, contrastive learning \citep{oord2018representation,wang2020hypersphere}, \etc. A general principle is to find a model where local relationships are captured and otherwise states are spread out. 

We argue that a similar idea can be used for value function learning. QRL finds a \qmetC{quasimetric} value function in which \presC{local distances are preserved}, but otherwise states are \maxiC{maximally spread out}.  All \emph{\ul{three properties}} are essential in accurately learning the function. Intuitively, a \qmet  function that maximizes the separation of states $s_0$ and $s_1$, subject to the constraint that it \presC{captures cost} for each adjacent pair of states, gives exactly the cost of the shortest path from $s_0$ to $s_1$. It can't be longer than that due to triangle inequality from \qmetC{quasimetric} and \presC{preservation of local distances}. It can't be shorter than that due to the \maxiC{maximal spreading}. Analogously, consider a chain with several links.  If one pushes the chain ends apart, then the distance between the ends is exactly equal to the length of all the links.

These \emph{\ul{three properties}}  (which we will indicate with the three text colors above) make our method distinct from other contrastive approaches to RL, and ensure that QRL \emph{provably} learns the \optC{optimal} value function.  Some alternatives use symmetrical metrics that cannot  capture complex dynamics \citep{yang2020plan2vec,ma2022vip,sermanet2018time}. Others do not enforce how much adjacent states are pulled together nor how much states are pushed apart, and rely on carefully weighting loss terms and specific sample distributions to estimate on-policy (rather than optimal) values \citep{eysenbach2022contrastive,oord2018representation}.

\begin{figure}
    \centering
    \vspace{-3.5pt}%
    \includegraphics[scale=0.218, trim=145 385 680 200, clip]{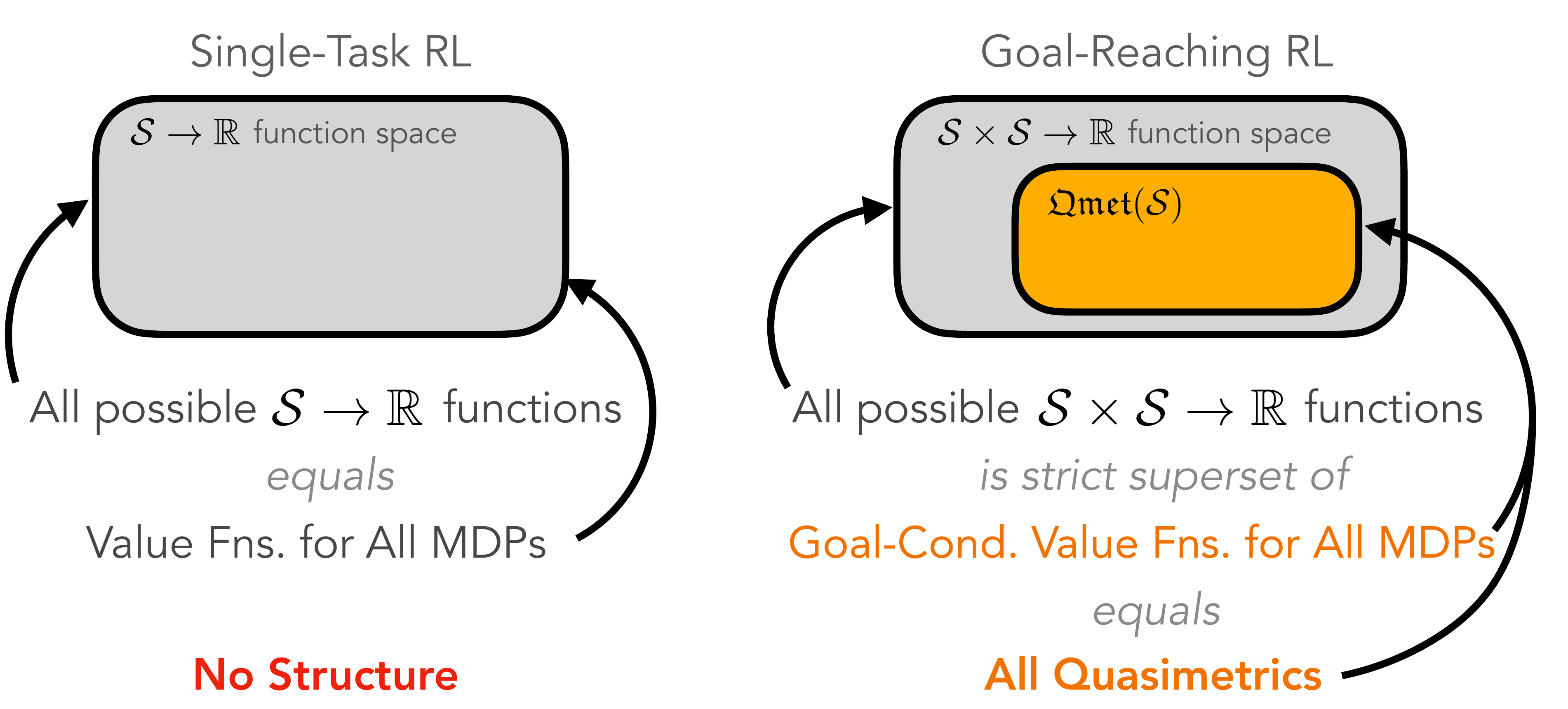}%
    \vspace{-8pt}%
    \caption{In multi-goal RL, the set of all possible (optimal) value functions is exactly the set of \qmetC{quasimetrics}. In single-task RL, there is no similar structure and value functions can be any function. }%
    \label{fig:structure-multi-goal}%
    \vspace{-6.25pt}
\end{figure}

% Quasimetrics is the correct way to do RL.

In summary, our contributions in this paper are  \begin{itemize}[topsep=-3.75pt]
    \item Based on the connection between value functions and quasimetrics (\Cref{sec:val-fn-qmet}), we propose QRL, a new RL framework that utilizes \emph{quasimetric} models to learn \emph{optimal} goal-reaching value functions (\Cref{sec:qrl}). 
    \item We provide theoretical guarantees (\Cref{sec:qrl-opt-intuition}) as well as thorough empirical analysis on a discretized \mntcar environment (\Cref{sec:disc-mnt-car}), highlighting qualitative differences with many existing methods. 
    \item We augment the proposed method to (optionally) also learn optimal Q-functions and/or policies (\Cref{sec:qrl-impl-extension}). 
    \item On offline \maze tasks, QRL performs well in single-goal and multi-goal evaluations, improving $> 37\%$ over the best baseline and $> 46\%$ over the \dFOURrl handcoded reference controller \cite{fu2020d4rl} (\Cref{sec:maze2d}).
    \item Our learned value functions can be directly used in conjunction with trajectory modeling and planning methods, improving their performances (\Cref{sec:maze2d}).
    \item On online goal-reaching settings, QRL shows up to $4.9\times$ improved sample efficiency and performance in both state-based and imaged-based observations, outperforming baselines including Contrastive RL \citep{eysenbach2022contrastive} and plugging \qmet Q-function models into existing RL algorithms \citep{liu2022metric} (\Cref{sec:online}).
\end{itemize}
% \vspace{-1.7pt}
%!TEX root = ../main.tex
\vspace{-1pt}\section{Value Functions are Quasimetrics}\label{sec:val-fn-qmet}
This section covers the preliminaries on goal-reaching RL settings, value functions, and \qmets. We also present a new result showing an equivalence between the latter two.

\vspace{-2pt}\subsection{Goal-Reaching Reinforcement Learning}

% \myparagraph{Goal-reaching RL.} 
We focus on the goal-reaching RL tasks in the form of   Markov Decision Processes (MDPs): $(\mathcal{S}, \mathcal{A}, P, R)$, where $\mathcal{S}$ is the state space, $\mathcal{A}$ is the action space, $P \colon \mathcal{S} \times \mathcal{A} \rightarrow \Delta(\mathcal{S})$ is the transition function, and $R \colon \mathcal{S} \times \mathcal{S} \rightarrow [R_\mathsf{min}, 0]$ is the reward (cost) function for performing a transition between two states.   $\Delta(A)$ denotes the set of distributions over set $A$.

Given a target state $s_\mathsf{goal} \in \mathcal{S}$, a goal-conditioned agent $\pi(a \given s; s_\mathsf{goal})$ is tasked to reach $s_\mathsf{goal}$ as soon as possible from the current state $s$. 
%
% Formally, until the agent reaches the goal, it receives a constant negative reward of $-1$ for each transition \footnote{This can be relaxed to be any cost function that depends on starting and ending states of a transition. We use a constant here for simplicity.}. The agent $\pi$ aims to maximize the expected total reward given any $s$ and $s_\mathsf{goal}$, which equals the negated number of transitions taken. We call this quantity the (goal-conditioned) value function $V^\pi(s; s_\mathsf{goal})$.
%
Formally, until the agent reaches the goal, it receives a negative reward (cost) $r(s, s')$ for each transition $(s, s')$. The agent $\pi$ aims to maximize the expected total reward given any $s$ and $s_\mathsf{goal}$, which equals the negated total cost. We call this quantity the (goal-conditioned) on-policy value function $V^\pi(s; s_\mathsf{goal})$ for $\pi$.

There exists an optimal policy $\pi^*$ that is universally optimal: \vspace{-11.5pt}\begin{equation}
    \forall s, s_\mathsf{goal}, \quad V^{\pi^*}(s; s_\mathsf{goal}) = \max_\pi V^\pi(s; s_\mathsf{goal}). \vspace{-3pt}
\end{equation}
We thus define the optimal value function $V^* \trieq V^{\pi^*}$.

Similarly, we can define the optimal state-action value function, \ie, Q-function: \begin{equation*}
    Q^*(s, a; s_\mathsf{goal}) \trieq \expect[s'\sim P(s, a)]{R(s, s') + V^*(s'; s_\mathsf{goal})}.
\end{equation*}

\vspace{-7.7pt}\subsection{Value-Quasimetric Equivalence}

Regardless of the underlying MDP, some \emph{fundamental} properties of optimal value $V^*$ always hold.

\myparagraph{Triangle Inequality.} As observed in prior works \citep{wang2022iqe,wang2022learning,liu2022metric,pitis2020inductive,durugkar2021adversarial}, the optimal value $V^*$ 
always 
obeys the triangle inequality %
% (due to its optimality and the Markov property of MDPs): 
(due to optimality and Markov property): 
\vspace{-11.5pt}
\begin{equation}
    \forall s_1, s_2, s_3, \quad V^*(s_1; s_2) + V^*(s_2; s_3) \leq V^*(s_1; s_3). \label{eq:v-tri-ineq}
\end{equation}
Intuitively, $V^*(s_1, s_3)$ is the highest value among all plans from $s_1$ to $s_3$; and $V^*(s_1; s_2) + V^*(s_2; s_3)$ is the highest among all plans from $s_1$ to $s_2$ and then to $s_3$, a more restricted set. Thus, \Cref{eq:v-tri-ineq} holds, and $-V^*$ is just like a metric function on $\mathcal{S}$, except that it may be \emph{asymmetrical}. 

\myparagraph{Quasimetrics} are a generalization of metrics in that they do not require symmetry. For a set $\mathcal{X}$, a \qmet is a function $d \colon \mathcal{X} \times \mathcal{X} \rightarrow \R_{\geq 0}$ such that \begin{align}
    \forall x_1, x_2, x_3,& \quad d(x_1, x_2) + d(x_2, x_3) \geq d(x_1, x_3) \\[-0.03cm]
    \forall x, & \quad d(x, x) = 0.
\end{align}
We use $\mqmet(\mathcal{X})$ to denote all such quasimetrics over $\mathcal{X}$.

\Cref{eq:v-tri-ineq} shows that $-V^* \in \mqmet(\mathcal{S})$. In fact, the other direction also holds: for any $d \in \mqmet(\mathcal{S})$, $-d$ is the optimal value function for some MDP defined on $\mathcal{S}$. 

\begin{theorem}[Value-Quasimetric Equivalence]\label{thm:val-qmet-equiv}
    \begin{align}
        \mqmet(\mathcal{S}) \equiv \{-V^* \colon V^* &\mathrlap{\text{ is the optimal value of}} \notag\\[-0.08cm]
        &\hspace{1em}\text{an MDP on $\mathcal{S}$}\}.\\[-16pt]\notag
    \end{align}%
    Generally, on-policy value $-V^\pi$ may not be a \qmet.%
    % \vspace{-5pt}
\end{theorem}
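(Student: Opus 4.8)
The plan is to prove the nontrivial inclusion, since the forward direction $-V^* \in \mqmet(\mathcal{S})$ is already in hand: \Cref{eq:v-tri-ineq} supplies the triangle inequality, and $V^*(s;s)=0$ holds because an agent already at the goal incurs no further cost, so $-V^*(s;s)=0$ and the two quasimetric axioms are met. It remains to show that \emph{every} $d \in \mqmet(\mathcal{S})$ arises as $-V^*$ for some MDP on $\mathcal{S}$, which is where the content of the theorem lies.

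For the reverse inclusion I would exhibit an explicit deterministic MDP $(\mathcal{S}, \mathcal{A}, P, R)$ whose optimal value is $-d$. Take the action space to be $\mathcal{A} = \mathcal{S}$, where choosing action $a=s'$ deterministically transitions to $s'$, i.e. $P(s, s') = \delta_{s'}$, and set the transition cost to be the quasimetric itself, $R(s, s') = -d(s, s')$ (with $R_\mathsf{min} = -\sup_{s,s'} d(s,s')$ so the range constraint is honored; see the final remark). Intuitively this makes every ordered pair of states reachable in a single step at a cost equal to their quasimetric distance.

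I would then verify $V^* = -d$ by a two-sided bound. For any goal $g$ and state $s$, the one-step plan $s \to g$ incurs cost $d(s,g)$, giving $V^*(s;g) \ge -d(s,g)$. Conversely, any realized trajectory $s = x_0, x_1, \dots, x_n = g$ has total cost $\sum_{i=1}^{n} d(x_{i-1}, x_i) \ge d(x_0, x_n) = d(s,g)$ by iterating the triangle inequality, so no policy---deterministic or stochastic---can attain expected cost below $d(s,g)$; hence $V^*(s;g) \le -d(s,g)$. Combining the two bounds yields $-V^* = d$, and the direct-jump policy is simultaneously optimal for every goal, which also discharges the existence of an optimal $\pi^*$.

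Finally, for the closing remark that $-V^\pi$ need not be a quasimetric, I would give a small counterexample rather than a general argument, since the point is precisely that the concatenation step used for $V^*$ fails once the policy is fixed: a single $\pi$ conditioned on two different goals induces two unrelated trajectories. Concretely, a three-state MDP in which $\pi$ takes an expensive direct action from $s_1$ to $s_3$ but cheap actions $s_1 \to s_2$ and $s_2 \to s_3$ gives $-V^\pi(s_1;s_2) - V^\pi(s_2;s_3) < -V^\pi(s_1;s_3)$, violating the triangle inequality. I expect the only genuine subtlety to be bookkeeping around the reward range $[R_\mathsf{min}, 0]$: when $d$ is unbounded the construction forces $R_\mathsf{min} = -\infty$ (or a restriction to bounded quasimetrics), and this is the single place where the otherwise clean equivalence must be qualified.
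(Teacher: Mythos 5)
Your proposal is correct and follows essentially the same route as the paper: the identical construction $\mathcal{A}=\mathcal{S}$, $P(s,s')=\delta_{s'}$, $R(s,s')=-d(s,s')$ for the reverse inclusion, and a three-state counterexample for the on-policy claim (the paper uses a goal-conditioned policy that self-loops forever, yielding an infinite distance, where you use a finite expensive action, but the idea is the same). Your write-up is in fact slightly more complete than the paper's, which merely asserts that the constructed MDP has optimal value $-d$ where you verify it with the two-sided bound, and which does not remark on the $[R_\mathsf{min},0]$ boundedness caveat that you correctly flag.
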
%
All proofs are deferred to the appendix.

\myparagraph{Structure emerges in multi-goal settings.}
The space of quasimetrics is the \emph{exact} function class for goal-reaching RL. In contrast, a specific-goal value function $V^*(\wcdot; s_\mathsf{goal})$ can be any \emph{arbitrary} function $\mathcal{S} \rightarrow \R$. In other words, going from 
% traditional
single-task RL to multi-task RL may be a harder problem, but also has much more structure to utilize (\Cref{fig:structure-multi-goal}). 

% \vspace{-3pt}
\subsection{\Qmet Models and RL}

{Quasimetric Models} refer to parametrized models of quasimetrics $d_\theta \in \mqmet(\mathcal{X})$, where $\theta$ is the parameter to be optimized. Many recent \qmet models are based on neural networks \citep{wang2022iqe,wang2022learning,pitis2020inductive},  can be optimized \wrt any \emph{differentiable} objective, and can potentially generalize to unseen inputs (due to neural networks). Many such models can universally approximate any \qmet and is capable of learning large-scale and complex \qmet structures \citep{wang2022iqe}. 

\myparagraph{An Overview of \Qmet Models.} A \qmet model $d_\theta$ usually consists of (1) a deep encoder mapping inputs in $\mathcal{X}$ to a generic latent space $\R^d$ and (2) a differentiable latent \qmet head $d_\mathsf{latent} \in \mqmet(\R^d)$ that computes the \qmet distance for two input latents. $\theta$ contains both the parameters of the encoder and parameters of the latent head $d_\mathsf{latent}$, if any. Recent works have proposed many choices of $d_\mathsf{latent}$, which have different properties and performances.
%Briefly speaking, a \qmet model on input space $\mathcal{X}$ consists of a deep encoder $g \colon \mathcal{X} \rightarrow \mathbb{R}^d$ and a latent quasimetric head $d_\mathsf{latent} \colon (\mathbb{R}^d \times \mathbb{R}^d) \rightarrow \R^d$, which is a differentiable \qmet function on the learned latent space $\R^d$. Then $g$ and $d_\mathsf{latent}$ 
We refer interested readers to \citet{wang2022iqe} for an in-depth treatment of such models.

% arXiv V1: 
% Previous works explored using such models to parametrize goal-conditioned value functions in standard goal-reaching RL objectives  (which includes enforcing \qmet properties) \citep{wang2022learning,liu2022metric}. 

% arXiv V2:
% \myparagraph{Subtleties of Using \Qmet Models in RL.} While using \Qmet models to parameterize goal-conditioned value functions $V$ in standard RL algorithms seems appealing, this approach has limitations. Standard algorithms employ temporal-difference learning or policy iteration, both of which depend on accurate intermediate results that are not quasimetrics. In fact, on-policy values (policy iteration intermediate results) generally aren't quasimetrics, as shown in \Cref{thm:val-qmet-equiv}. Simply incorporating \qmet models may yield minor benefits or necessitate relaxing constraints and inductive bias. However, we can directly learn $V^*$ using \emph{\qmets}, bypassing these iterative procedures.

\myparagraph{Subtleties of Using \Qmet Models in RL.} It is tempting to parametrize goal-conditioned value functions with \qmet models in standard RL algorithms, which optimizes for $V^* \in \mqmet(\mathcal{S})$. 
% Previous works explored using such models to parametrize goal-conditioned value functions $V$ in standard RL algorithms.
%(which includes enforcing \qmet properties). 
However, these algorithms usually use temporal-difference learning or policy iteration, whose success relies on accurate representation of intermediate results (\eg, on-policy values; \Cref{thm:val-qmet-equiv}) \citep{wang2020statistical,wang2021instabilities}) that are \emph{not} quasimetrics.  Indeed, simply using \qmet models in such algorithms may yield only minor benefits \citep{wang2022learning,wang2022iqe} or require significant relaxations of \qmet inductive bias \citep{liu2022metric}. 

Can we directly learn $V^*$ without those iterative procedures? Fortunately, the answer is \emph{yes}, with the help of \emph{\qmets}.

% In the next section, we present a new RL algorithm that is specifically designed for \qmet models, and is empirically more sample-efficient and better performing \Cref{sec:disc-mnt-car,sec:expr}.
%!TEX root = ../main.tex
% \vspace{-5.5pt}%
% \vspace{-4.5pt}%
\section{Quasimetric Reinforcement Learning}\label{sec:qrl}

\Qmet Reinforcement Learning (QRL) at its core learns the \optC{optimal} goal-conditioned value function $V^*$ that is parametrized by a \qmet model $d_\theta \subset \mqmet(\mathsf{S})$.

Similar to many recent RL works \citep{kumar2019reward,ghosh2019learning,janner2022diffuser,janner2021offline,emmons2021rvs,chen2021decision,paster2022you,yang2022dichotomy}, our method is derived with the assumption that the environment dynamics $P$ are \emph{deterministic}. 
% For simplicity, and as commonly done in goal-reaching RL works \citep{eysenbach2022contrastive,tian2020model,eysenbach2020c,zhang2020generating},  we also assume that each transition has unit cost $1$ (\ie, reward is $-1$), which corresponds to the usual practical objective of reaching the goal as soon as possible. However, this can be potentially relaxed (see \Cref{sec:more-extensions}).
% We simplify discussion by assuming no self-transitions (\ie, a transition from one state to itself). Both our algorithm and theoretical results still work with such transitions (See \Cref{sec:more-extensions}).

Given ways to sample (\eg, from a dataset / replay buffer) 
\begin{align}
    (
    \overbracket{\vphantom{s'}s}^{\mathllap{\textsf{current sta}}\mathrlap{\textsf{te}}}, 
    \underbracket{a}_{\mathllap{\textsf{acti}}\mathrlap{\textsf{on}}}, 
    \overbracket{s'}^{\mathllap{\textsf{ne}}\mathrlap{\textsf{xt state}}}, 
    \underbracket{r}_{\mathllap{\textsf{re}}\mathrlap{\textsf{ward} \leq 0}}
    ) & \sim p_\mathsf{transition}  \tag{transitions} \\[-0.02cm]
    s & \sim p_\mathsf{state}  \tag{random state} \\[-0.02cm]
    s_\mathsf{goal} & \sim p_\mathsf{goal} \tag{random goal}, 
\end{align}
QRL optimizes a \qmet model $d_\theta$ as following:
%
% \begin{equation}
\begin{align}
    % \hspace*{-1cm}
    &\max_{\theta}~\mathbb{E}_{\substack{s\sim p_\mathsf{state}\\g \sim p_\mathsf{goal}}}[ d_\theta(s, g) ] \label{eq:qrl}\\
    % \hspace*{-1cm}
    & \quad \text{subject to } \mathbb{E}_{\substack{(s, a, s', r) \sim p_\mathsf{transition}}}[ \mathtt{relu}(d_\theta(s, s') + r)^2] \leq\epsilon^2,
    % \label{eq:qrl}
    \notag
    % \eqname{\llap{QRL}\vspace{-9pt}}
    % \eqname{\llap{QRL}\vspace{-16pt}} \\[-19pt]\notag
\end{align}%
where $\eps > 0$ is small, and $\mathtt{relu}(x) \trieq \max(x, 0)$ prevents $d_\theta(s,s')$ from exceeding the transition cost $-r \geq 0$.
% \stepcounter{equation}
% \tag{\theequation; \textsf{QRL}}
% \end{equation}

After optimization, we take $-d_\theta$ as our estimate of $V^*$. \Cref{sec:qrl-impl-extension} discusses extensions that learn optimal Q-functions $Q^*$ and policies, making QRL suitable both as a standalone RL method or in conjunction with other RL methods.

\subsection{QRL Learns the Optimal Value Function}\label{sec:qrl-opt-intuition}

\begin{figure*}
    \centering
    % \vspace{-2pt}
    % \includegraphics[scale=0.2605, trim=18 631 20 60, clip]{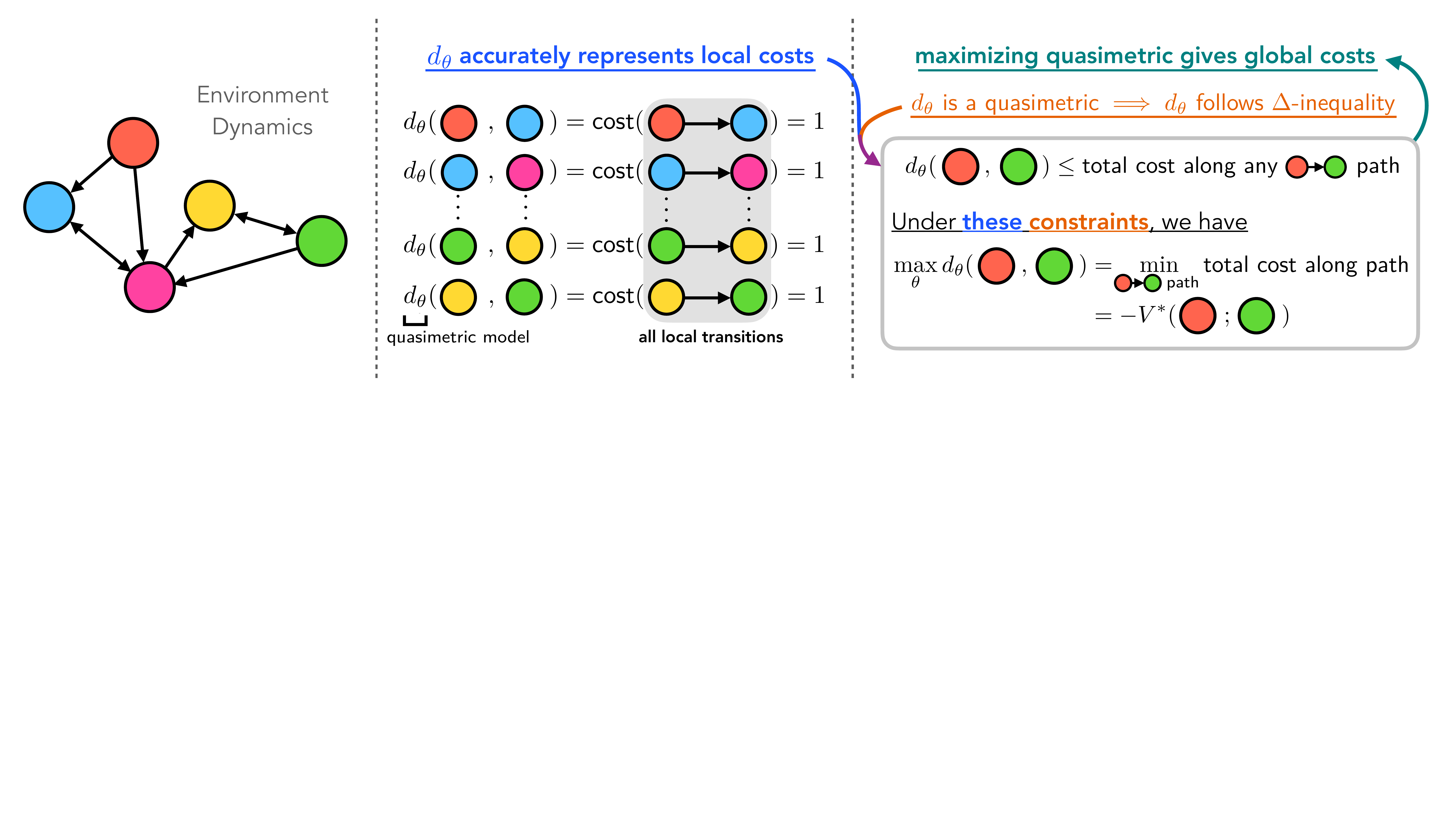}%
    \includegraphics[scale=0.263, trim=27 631 20 58, clip]{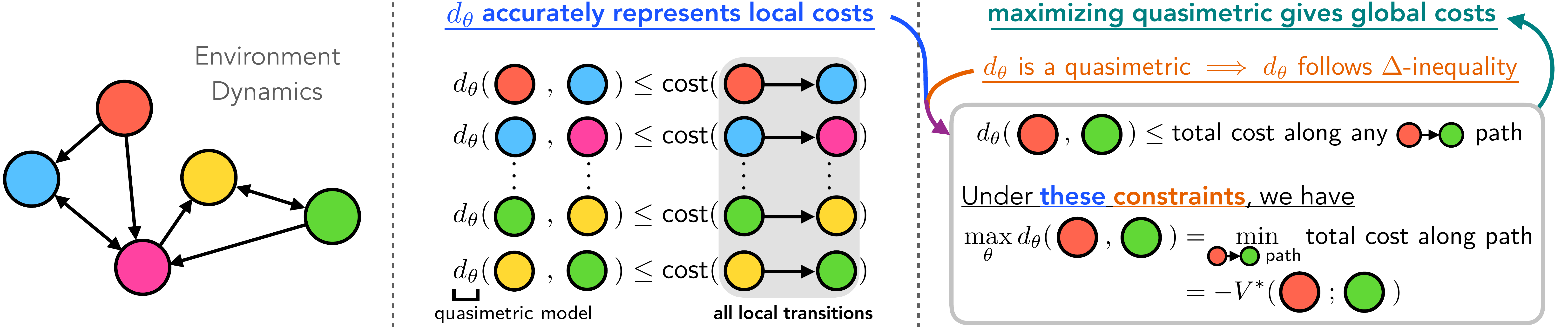}%
    % \vspace{-6.5pt}
    \vspace{-4.5pt}
    \caption{QRL objective finds length of the shortest path connecting two states, \ie, the \optC{optimal value} $V^*$.}%
    \label{fig:qrl-push-pull}%
    % \vspace{-7pt}
    \vspace{-6pt}
\end{figure*}

% Q-Learning objective \citep{sutton2018reinforcement}
By using \qmetC{quasimetric models} $d_\theta$ to parametrize value functions, we inherently satisfy the triangle-inequality constraints. What additional constraints should we add in order to find the optimal value function for a specific MDP? 

\myparagraph{A Physical Analogy.}Consider two objects connected by multiple chains. Each chain is formed by several links. If we \maxiC{pull them apart}, their distance will be limited by the shortest of all chains. Then, simply measuring the distance between the two objects gives the length of that ``optimal'' chain. This argument relies on (1) the \qmetC{triangle inequality} of our Euclidean physical space and (2) that each link of the chains has \presC{a fixed length unaffected by our pulling}.

QRL works  by the  same principles, but in a \qmetC{quasimetric} space that both satisfies the triangle inequality and can capture any asymmetrical MDP dynamics (\Cref{fig:qrl-push-pull}):\begin{itemize}
    \item \textbf{Locally}, we constrain searching of $V^*$ to $d_\theta$'s that are \presC{consistent with local costs, \ie, not overestimating them}: \begin{equation}
        \mathllap{\forall\ \mathsf{transition}}\ (s, a, s', r),\quad    d_\theta(s, s') \mathrlap{{}\leq -r.}  \label{eq:qrl-local-cost}
    \end{equation}
    We ensure this because $d_\theta$ should approximate $-V^*$ and \begin{equation*}
        % d_\theta(s, s') \leq \underbracket{-r}_{\mathclap{\textsf{local cost of this transition}}} = -V^*(s; s'). \label{eq:qrl-local-cost}
        % d_\theta(s, s') \leq -r = -V^*(s; s'). \label{eq:qrl-local-cost}
        % d_\theta(s, s') 
        % & = -V^*(s; s') \\
        -V^*(s; s')
        % & = \mathsf{cost\ of\ shortest\ path\ }s\mathsf{\ to\ }s' \\[-0.15cm]
        \leq \mathsf{cost\ of\ specific\ path\ } s \xrightarrow{\mathsf{action\ } a} s'= -r.
    \end{equation*}
    
    % .  A transition $(s, a, s', r)$ means \begin{equation}
    %     % d_\theta(s, s') \leq \underbracket{-r}_{\mathclap{\textsf{local cost of this transition}}} = -V^*(s; s'). \label{eq:qrl-local-cost}
    %     % d_\theta(s, s') \leq -r = -V^*(s; s'). \label{eq:qrl-local-cost}
    %     % d_\theta(s, s') 
    %     % & = -V^*(s; s') \\
    %     -V^*(s; s')
    %     % & = \mathsf{cost\ of\ shortest\ path\ }s\mathsf{\ to\ }s' \\[-0.15cm]
    %     \leq \mathsf{cost\ of\ path\ } s \xrightarrow{\mathsf{action\ } a} s'= -r.
    % \end{equation}
    % Because $d_\theta$ should approximate $-V^*$, we ensure \begin{equation}
    %     \mathllap{\forall\ \mathsf{transition}} (s, a, s', r),\quad    d_\theta(s, s')  \leq r.  \label{eq:qrl-local-cost}
    % \end{equation}
    
    % Here, \textbf{inequality constraint $d_\theta(s, s') \leq -r$  }  handles the case where each $(s,a)$ pair may induce different costs. If the costs are uniform, this can be an equality constraint.
    
    \item \textbf{Globally}, since $d_\theta$ is a \qmetC{quasimetric} that satisfies the triangle inequality and \Cref{eq:qrl-local-cost}, for \emph{every state $s$ and goal $g$}, {any path $s \rightarrow g$ places a constraint on $d_\theta(s, g)$: } \begin{align}
    & d_\theta(s, g)
        \leq \mathsf{total\ cost\ of\ path\ connecting\ }s\mathsf{\ to\ }g.\notag
    \end{align}
    
    Optimal cost from $s$ to $g$ is given by \maxiC{pulling them apart}: \begin{align}
    % & \hphantom{{}={}} \max_\theta d_\theta(s, g) \notag\\
    \max_\theta d_\theta(s, g)& = \mathsf{cost\ of\ shortest\ path\ connecting\ } 
 s \mathsf{\ to\ } g \notag\\[-0.1cm]
    & = -V^*(s; g). 
\end{align}

\end{itemize}

\textbf{Optimal \qmet $-V^*$} achieves this maxima for all $(s, g)$ pairs. Therefore, we maximize $d_\theta(s, g)$ simultaneously for all $(s, g)$ pairs: \begin{align}
    \theta^* & = \argmax_\theta \mathbb{E}_{\substack{s\sim p_\mathsf{state}\\g \sim p_\mathsf{goal}}}[ d_\theta(s, g) ] \label{eq:qrl-ideal-objective}\\[-0.115cm]
    & \qquad\hspace{10pt} \text{subject to } \forall (s, a, s', r)~\mathsf{transition}, d_\theta(s, s') \leq -r. \notag
\end{align}
This gives exactly the \optC{optimal value}:  \begin{equation}
    d_{\theta^*} (s, g) = -V^* (s; g), \qquad \forall s, g,
\end{equation}
(assuming that $p_\mathsf{state}$ and $p_\mathsf{goal}$ having sufficient coverage).

\textbf{The linear programming characterization of $V^*$ }  \citep{manne1960linear,denardo1970linear}  is similar to \Cref{eq:qrl-ideal-objective}. However, instead of enforcing triangle inequalities via $\size{\mathcal{A}}\size{\mathcal{S}}^2$ constraints, our \qmetC{quasimetric models} automatically satisfy them.

\subsubsection{Theoretical Guarantees}\label{sec:theory}

We now formally state the recovery guarantees for QRL in both the ideal setting (\ie, optimizing over entire $\mqmet(\mathcal{S})$) and the function approximation setting.

The proofs of the following results are mostly formalizations of the ideas above. All proofs are presented in \Cref{sec:proof}.

\begin{theorem}[Exact Recovery]\label{thm:qrl-exact-recovery}
If \Cref{eq:qrl-ideal-objective} optimizes $d_\theta$ over the entire $\mqmet(\mathcal{S})$, %(\ie, $\{d_\theta\}_\theta = \mqmet(\mathcal{S})$),
then for $s \sim p_\mathsf{state}, g \sim p_\mathsf{goal}$, we have $d_{\theta^*}(s, g) = -V^*(s; g)$ almost surely.
\end{theorem}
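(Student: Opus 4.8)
The plan is to exhibit the target quasimetric $d^\star \trieq -V^*$ explicitly and prove, via a sandwich argument, that it is the unique maximizer of \Cref{eq:qrl-ideal-objective} up to almost-sure equality. By \Cref{thm:val-qmet-equiv} we have $d^\star = -V^* \in \mqmet(\mathcal{S})$, so $d^\star$ lies in the feasible search space. I would then establish two facts: (i) $d^\star$ satisfies every constraint, and (ii) $d^\star(s,g) \geq d(s,g)$ pointwise for every feasible $d \in \mqmet(\mathcal{S})$ and every pair $(s,g)$. Together these say $d^\star$ maximizes $d(s,g)$ simultaneously at each point, hence maximizes the expectation, so $d^\star$ is an optimal solution and may be taken as $d_{\theta^*}$.

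For (i), fix a transition $(s,a,s',r)$. Since $P$ is deterministic, the agent may take action $a$ at $s$, arrive at $s'$ with reward $r$, and — treating $s'$ as the goal — terminate; this one-step plan is feasible, so by optimality $V^*(s; s') \geq r$, giving $d^\star(s,s') = -V^*(s;s') \leq -r$, which is exactly \Cref{eq:qrl-local-cost}. For (ii), fix a feasible $d$ and a pair $(s,g)$, and consider any path $s = s_0 \to s_1 \to \cdots \to s_n = g$ whose consecutive pairs are valid transitions with rewards $r_0, \dots, r_{n-1}$. Telescoping the quasimetric triangle inequality and then applying the local-cost constraint to each edge yields
\begin{equation*}
    d(s, g) \leq \sum_{i=0}^{n-1} d(s_i, s_{i+1}) \leq \sum_{i=0}^{n-1} (-r_i),
\end{equation*}
so $d(s,g)$ is bounded by the total cost of that path. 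Minimizing the right-hand side over all paths and invoking the determinism-based identity that $-V^*(s;g)$ equals the minimum path cost from $s$ to $g$, we obtain $d(s,g) \leq -V^*(s;g) = d^\star(s,g)$, as claimed.

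To upgrade pointwise optimality to the almost-sure statement, note that any optimal $d_{\theta^*}$ achieves the same (optimal) objective value as $d^\star$, so $\mathbb{E}_{s,g}[\,d^\star(s,g) - d_{\theta^*}(s,g)\,] = 0$. By (ii) the integrand $d^\star(s,g) - d_{\theta^*}(s,g) \geq 0$ everywhere, and a nonnegative random variable with zero expectation vanishes almost surely; hence $d_{\theta^*}(s,g) = -V^*(s;g)$ for $p_\mathsf{state} \otimes p_\mathsf{goal}$-almost every $(s,g)$. Note the coverage hypothesis plays no role in this a.s. claim — it only governs whether the recovered values extend to pairs outside the support.

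The step I expect to be the main obstacle is the shortest-path characterization used inside (ii): that $-V^*(s;g)$ genuinely equals the infimal total cost over paths from $s$ to $g$, which is precisely where the deterministic-dynamics assumption is essential (for stochastic $P$ this identity fails and on-policy values need not be quasimetrics, cf. \Cref{thm:val-qmet-equiv}). Care is also needed regarding whether a minimizing path exists (attainment of the infimum) and regarding pairs where $g$ is unreachable from $s$, for which $-V^*(s;g)$ is $+\infty$; the cleanest route is to argue within extended quasimetrics and assume the sampling distributions place mass only on reachable pairs, or to take the infimum over paths directly rather than asserting an attaining path.
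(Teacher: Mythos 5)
Your proposal is correct and takes essentially the same route as the paper's proof: both verify that $-V^*$ is a feasible quasimetric, bound every feasible $d$ pointwise by the shortest-path cost (triangle inequality telescoped along a path plus the local-cost constraints), and conclude almost-sure equality from a nonnegative gap having zero expectation. Your extra care about attainment of the infimum and unreachable pairs goes slightly beyond what the paper makes explicit, but the core argument is identical.
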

\vspace{-1pt}

In the more realistic case, we use a \qmet family that is not quite as big as the entire $\mqmet(\mathcal{S})$ but flexible enough to have universal approximation (\eg, IQE \citep{wang2022iqe}). Using a relaxed constraint, we still have a strong guarantee of recovering true $V^*$, ensuring a small error even for $(s, g)$ pairs that are far apart.
\vspace{1.5pt}

\begin{theorem}[Function Approximation; Informal]\label{thm:qrl-func-approx}
\begingroup
Consider a \qmet model family $\{d_\theta\}_\theta$ that is a universal approximator of $\mqmet(\mathcal{S})$ (in terms of $L_\infty$ error). If we solve \Cref{eq:qrl-ideal-objective} with a relaxed constraint, where \begin{equation}
    \forall (s, a, s', r)~\mathsf{transition},\quad \mathtt{relu}(d_\theta(s, s') + r) \leq \eps,\label{eq:qrl-relaxed-constraint}
\end{equation}
for small $\eps >0$. Then, for $s \sim p_\mathsf{state}, g \sim p_\mathsf{goal}$,  we have \begin{equation*}
    \big\lvert {d_{\theta^*}(s, g) + (1 + \eps) V^*(s; g)}  \big\rvert \in [-\sqrt{\eps},0],
\end{equation*}
\ie, $d_{\theta^*}(s, g)$ recovers $-V^*(s; g)$ up to a known scale,
with probability 
$1 - \mathcal{O}(-\sqrt{\eps}\cdot \mathbb{E}[V^*])$.
\endgroup
\end{theorem}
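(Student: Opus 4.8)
The statement is a two-sided estimate, so I would establish the two ends of the interval $[-\sqrt{\eps},0]$ separately and then combine them. The upper end, $d_{\theta^*}(s,g) + (1+\eps)V^*(s;g)\le 0$, I expect to hold \emph{pointwise and deterministically}, coming purely from feasibility (the relaxed local constraint \eqref{eq:qrl-relaxed-constraint}) together with the triangle inequality built into $d_\theta$. The lower end I expect to hold only \emph{in expectation}, coming from the maximal-spreading objective, and then to be converted into a high-probability pointwise guarantee by Markov's inequality. The structural observation driving the argument is that the optimum is squeezed between two scalings of the same optimal quasimetric: it cannot exceed $(1+\eps)(-V^*)$ by feasibility, yet it can always match the \emph{unscaled} feasible competitor $-V^*$. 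The $\eps$-sized gap between these two, run through Markov at threshold $\sqrt{\eps}$, is exactly what produces both the $\sqrt{\eps}$ error and the $1-\mathcal{O}(-\sqrt{\eps}\,\mathbb{E}[V^*])$ probability.

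\textbf{Upper bound.} Fix $s,g$ and let $s = x_0 \to x_1 \to \dots \to x_k = g$ be an optimal (shortest-cost) deterministic trajectory, with per-transition costs $c_i \ge 0$ so that $\sum_i c_i = -V^*(s;g)$. Chaining the triangle inequality for $d_{\theta^*}$ gives $d_{\theta^*}(s,g)\le \sum_{i=1}^k d_{\theta^*}(x_{i-1},x_i)$, and the relaxed constraint bounds each summand by $c_i + \eps$, so $d_{\theta^*}(s,g)\le -V^*(s;g) + k\eps$. To replace the additive $k\eps$ by the multiplicative $\eps\cdot(-V^*(s;g))$, I would invoke the normalization that each transition costs at least one unit ($c_i\ge 1$, e.g.\ the unit step-cost setting), which forces $k\le\sum_i c_i = -V^*(s;g)$ and yields $d_{\theta^*}(s,g)\le (1+\eps)\,(-V^*(s;g))$, i.e.\ the $\le 0$ end of the interval. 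This step silently uses coverage: the edges of the optimal trajectory must lie in the support of $p_\mathsf{transition}$ for the constraint to bind on them.

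\textbf{Lower bound and combination.} By \Cref{thm:val-qmet-equiv}, $-V^*\in\mqmet(\mathcal{S})$, and it satisfies the local constraint exactly, since $-V^*(s;s')\le -r$ for every transition makes $\mathtt{relu}(-V^*(s;s')+r)=0\le\eps$. Because $\{d_\theta\}_\theta$ is a universal $L_\infty$ approximator, for every $\delta\in(0,\eps]$ there is a feasible $\theta_0$ with $\lVert d_{\theta_0}-(-V^*)\rVert_\infty\le\delta$ and $\mathbb{E}[d_{\theta_0}(s,g)]\ge \mathbb{E}[-V^*(s;g)]-\delta$; optimality of $\theta^*$ for the spreading objective then gives $\mathbb{E}[d_{\theta^*}(s,g)]\ge \mathbb{E}[-V^*(s;g)]$. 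Now set $\Delta(s,g)\coloneqq (1+\eps)(-V^*(s;g)) - d_{\theta^*}(s,g)\ge0$, nonnegative by the upper bound. The two bounds give $\mathbb{E}[\Delta]\le (1+\eps)\mathbb{E}[-V^*]-\mathbb{E}[-V^*]=\eps\,\mathbb{E}[-V^*]$, so Markov's inequality at level $\sqrt{\eps}$ yields $\Pr[\Delta\ge\sqrt{\eps}]\le\sqrt{\eps}\,\mathbb{E}[-V^*]$. Hence with probability at least $1-\sqrt{\eps}\,\mathbb{E}[-V^*]=1-\mathcal{O}(-\sqrt{\eps}\,\mathbb{E}[V^*])$ we have $0\le\Delta<\sqrt{\eps}$, which rearranges to $d_{\theta^*}(s,g)+(1+\eps)V^*(s;g)\in[-\sqrt{\eps},0]$.

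\textbf{Main obstacle.} The routine parts are the triangle-inequality chaining and the Markov step. The delicate points are (i) making the function-approximation argument for the lower bound genuinely \emph{feasible} — the approximant of $-V^*$ must not violate the relaxed constraint, which is precisely why the $\delta\le\eps$ budget and the exact feasibility of $-V^*$ (rather than of its $(1+\eps)$ scaling) are what one uses; and (ii) surfacing the implicit cost-normalization assumption $c_i\ge1$ that converts the path-length error $k\eps$ into the advertised multiplicative factor $(1+\eps)$. I would expect the cleanest formal statement to fix unit costs, under which both of these become transparent and the $\sqrt{\eps}$ emerges as the optimal balance between the Markov threshold and the resulting tail probability.
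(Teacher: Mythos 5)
Your proposal is correct and its overall skeleton matches the paper's proof: a \emph{pointwise} upper bound $d_{\theta^*}(s,g) \le -(1+\eps)V^*(s;g)$ from feasibility plus chained triangle inequalities, an \emph{expectation} lower bound $\mathbb{E}[d_{\theta^*}(s,g)] \ge -\mathbb{E}[V^*(s;g)]$ from exhibiting a feasible competitor, and a tail bound at threshold $\sqrt{\eps}$ --- the paper phrases this last step by bounding $p = \mathbb{P}\left[d_{\theta^*}(s,g) < -(1+\eps)V^*(s;g) - \delta\right]$ and combining the two expectation inequalities, which is exactly Markov's inequality applied to your nonnegative variable $\Delta$. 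Where you genuinely diverge is the construction of the feasible competitor. The paper's \Cref{lemma:relaxed-qrl-lb} does not approximate $-V^*$ directly: it introduces a cost-inflated MDP $\tilde{\mathcal{M}} = (\mathcal{S}, \mathcal{A}, P, R - \tfrac{\eps}{2})$, whose optimal value satisfies $-\tilde{V}^*(s;g) \ge \tfrac{\eps}{2}\cdot\left[s \neq g\right] - V^*(s;g)$, and approximates $-\tilde{V}^*$ to accuracy $\eps/2$; the inflation exactly absorbs the approximation error, producing a single competitor that is simultaneously feasible and pointwise $\ge -V^*$. You instead approximate $-V^*$ itself to accuracy $\delta \le \eps$ (feasible because $-V^*$ meets the local constraint with zero slack, so the approximant violates it by at most $\delta$) and send $\delta \to 0$. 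Your route is simpler, but it requires the family to achieve \emph{arbitrarily small} $L_\infty$ error; the paper's formal statement (\Cref{thm:qrl-func-approx-formal}) only grants a family of fixed accuracy $\eps/2$, under which your argument would leave an extra additive $\eps/2$ slack in the expectation bound (harmless up to constants when $-\mathbb{E}[V^*]$ is bounded below, but not the bound as stated) --- the inflated-MDP trick is precisely what eliminates that slack. Two further points in your favor: you correctly surface the implicit cost normalization (per-transition cost at least $1$) needed to convert the additive path-length error $k\eps$ into the multiplicative factor $(1+\eps)$, which the paper's \Cref{eq:sol-ubd} asserts ``by the constraint and triangle inequality'' without making explicit, and you correctly flag that the constraint must bind on the edges of optimal paths (coverage), which the paper likewise treats implicitly.
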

% \vspace{-1pt}

% A general version of \Cref{thm:qrl-func-approx} is proved  in \Cref{sec:proof}.

% \psm{}{
% The above result ensures that the error stays small even for $(s, g)$ pairs that are far apart. We formally state and prove a more general version of \Cref{thm:qrl-func-approx} in the appendix.
% }
% This result ensures that the error stays small even for $(s, g)$ pairs that are far apart. We formally state and prove this result in the appendix.

\begin{figure*}[ht]
    \centering
    \vspace{-3pt}
    \includegraphics[scale=0.254, trim=0 485 0 17, clip]{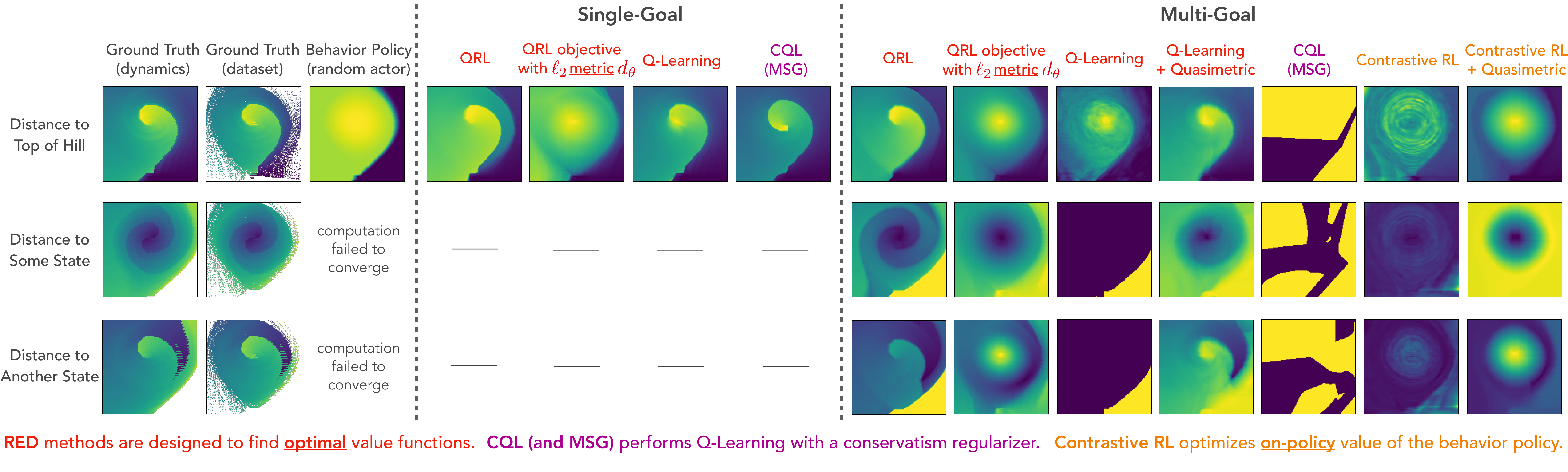}%
    \vspace{-11pt}
    \caption{Learned value functions on offline \mntcar. Each plot shows the estimated values from every state towards a single goal (indicated in the leftmost column) as a $2$-dimensional image (velocity as $x$-axis, position as $y$-axis). \textbf{Left}: Ground truth distances, as well as the (expected) distance for the behavior policy that generated training data. \textbf{Middle}: Learned value functions for single-goal methods. \textbf{Right}: Learned value functions for multi-goal methods. Only QRL accurately recovers the ground truth distance structure in both settings, which crucially relies on the asymmetry of \qmets. Q-learning methods generally fail in multi-goal settings.
    Their learned values, while improved with quasimetric models,  cannot capture the fine details.
    % While adding quasimetrics does improve\psm{, the learned values}{\xspace the learned values, they} still can't capture the fine details.
    Contrastive RL only inaccurately estimates the on-policy values.}
    \label{fig:dmc-valfn}%
    % \vspace{-11pt}
    % \vspace{-10pt}
    \vspace{-5.5pt}
\end{figure*}

\vspace{-3pt}\subsection{A Practical Implementation}

\myparagraph{\Qmet Model.} 
We use Interval Quasimetric Embeddings (IQE; \citet{wang2022iqe}) as our \qmet model family $\smash{\{d_\theta\}_\theta}$.
IQEs have convincing empirical results in learning various \qmet spaces, and enjoy strong approximation guarantees (as needed in \Cref{thm:qrl-func-approx}).

\myparagraph{Constrained Optimization } is done via dual optimization and jointly updating a Lagrange multiplier $\lambda \geq 0$ \citep{eysenbach2021robust}. We use a relaxed constraint that local costs are properly modeled in expectation.

\myparagraph{Stable Maximization of $d_\theta$.} In practice, maximizing
% $ \mathbb{E}_{\substack{s\sim p_\mathsf{state}\\g \sim p_\mathsf{goal}}}[ d_\theta(s, g)]$
$ \mathbb{E}[ d_\theta(s, g)]$
via gradient descent tends to increase the weight norms of the late layers in $d_\theta$. This often leads to slow convergence since $\lambda$ needs to constantly catch up. Therefore, we instead place a smaller weight on distances $d_\theta(s, g)$ that are already large and optimize
% $\mathbb{E}_{\substack{s\sim p_\mathsf{state}\\g \sim p_\mathsf{goal}}}[ \phi(d_\theta(s, g))]$
$\mathbb{E}[ \phi(d_\theta(s, g))]$
, where $\phi$ is a monotonically increasing convex function (\eg, affine-transformed softplus). This is similar to the discount factor in Q-learning, which causes its MSE loss to place less weight on transitions of low value. 

\myparagraph{Full Objective.} Putting everything together, we implement QRL to jointly update $(\theta, \lambda)$ according to
\begin{align}
    % \hspace*{-1cm}
    &
    \min_{\theta} \max_{\lambda \geq 0}
    -\mathbb{E}_{\substack{s\sim p_\mathsf{state}\\g \sim p_\mathsf{goal}}}[ \phi(d^\mathsf{IQE}_\theta(s, g)) ]  + {} \notag\\[-0.25em]
    % \hspace*{-1cm}
    & \qquad 
    \lambda \left(\mathbb{E}_{\substack{(s, a, s', r) \sim p_\mathsf{transition}}}[ \mathtt{relu}(d^\mathsf{IQE}_\theta(s, s') + r)^2] - \epsilon^2 \right).
    \label{eq:qrl-full}
\end{align}
% \vspace{-12pt}

\vspace{-15pt}\subsection{Analyses and Comparisons via Discretized \mntcar}\label{sec:disc-mnt-car}

We empirically analyze QRL and compare to previous works via experiments on the \mntcar environment with a discretized state space. In this environment, the agent observes the location and velocity of a car, and controls it to reach the top of a hill. Due to gravity and velocity, the dynamics are highly asymmetrical. We discretize the $2$-dimensional state space into $160\times 160$ bins so that we can compute the ground truth value functions. We collected an offline dataset by running a uniform random policy, and evaluated the learning result of various methods, including \begin{itemize}
    \item \textbf{QRL}, our method;
    \item Using \textbf{QRL} objective to train a \textbf{symmetrical $\ell_2$ distance} value function;
    \item \textbf{Q-Learning}  with regular unconstrained Q function class;
    \item \textbf{Q-Learning with \qmet} function class;
    \item \textbf{Contrastive RL} \citep{eysenbach2022contrastive}, which uses a contrastive objective but  estimates on-policy values;
    \item \textbf{Contrastive RL} with \qmet function class;
    \item \textbf{Conservative Q-Learning (CQL)} \citep{kumar2020conservative}, which regularizes Q-Learning to reduce over-confidence in out-of-distribution regions;
    \item \textbf{Model Standard-deviation Gradients (MSG)} \citep{ghasemipour2022so}, a state-of-the-art offline RL algorithm using an ensemble of up to $64$ CQL value functions to estimate uncertainty and train policy;
    \item \textbf{Diffuser} \citep{janner2022diffuser}, a representative trajectory modelling methods with goal-conditioned sampling.
\end{itemize}
QRL can be used for both single-goal and multi-goal settings by specifying $p_\mathsf{goal}$. For methods that are not designed for multi-goal settings (MSG and Q-Learning), we use Hindsight Experience Replay (HER; \citet{andrychowicz2017hindsight})  to train the goal-conditioned value functions.

\begin{figure*}
    \centering
    \vspace{-1pt}
    \includegraphics[scale=0.2535, trim=0 170 0 179, clip]{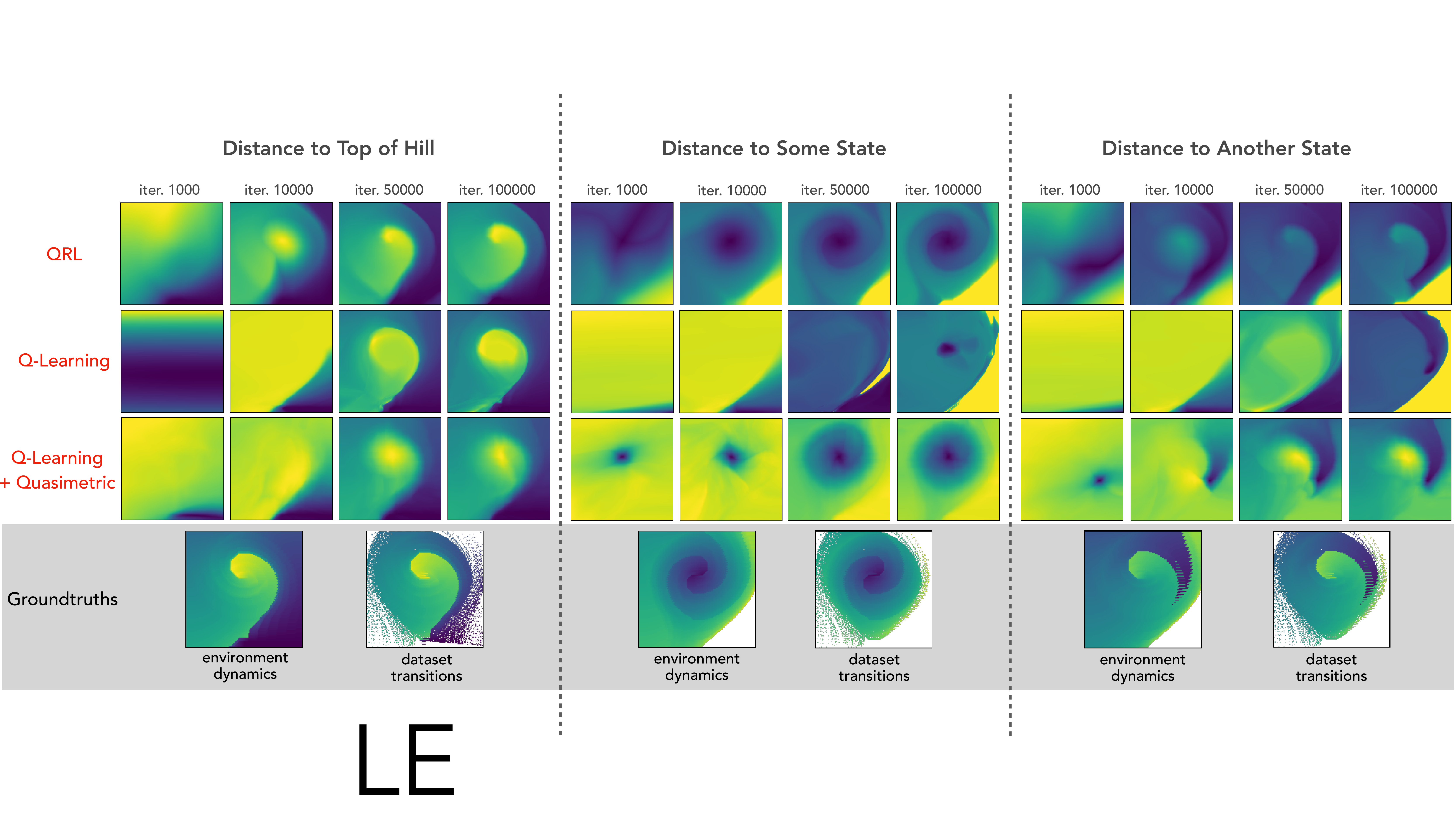}%
    % \vspace{-8pt}
    \vspace{-6pt}
    \caption{Learning dynamics on the offline \mntcar setting. Each plot shows the learned values from every state towards a single goal (indicated at the top) as a $2$-dimensional image (velocity as $x$-axis, position as $y$-axis). Yellow is greater distance (lower value function). Bottom row shows the ground truth distances based on true environment dynamics, and ground truth distances based on transitions appearing in dataset. QRL generally learns the target value function structures much earlier than Q-learning methods. }%
    \label{fig:dmc-traindyn}%
    % \vspace{-10pt}
    \vspace{-6pt}
\end{figure*}
% \FloatBarrier

\vspace{-0.5pt}\myparagraph{Evaluation.} Visually, we compare the learned values against ground truths (\Cref{fig:dmc-traindyn,fig:dmc-valfn}). We test the agents' control performances in both reaching the original goal, top of the hill, as well as $9$ distinct states (\Cref{tab:dismntcar}). A diverse set of goals allows us to evaluate how well the value functions capture the true environment dynamics structure. For QRL and Q-Learning, agents take the action that greedily maximizes the estimated value for simplicity. We describe how to obtain Q-values for QRL later in \Cref{sec:qrl-impl-extension}.

\vspace{-0.5pt}\myparagraph{Q-Learning}
is the standard way to train optimal value functions for such discrete-action space environments. Despite its popularity, many issues have been identified with its temporal-difference training, such as slow convergence \citep{lyle2022learning,fujimoto2022should}.
% } and arguably meaningless loss values \citep{fujimoto2022should}.
\Cref{fig:dmc-traindyn} visualizes the learning dynamics of Q-Learning and QRL, where vanilla Q-Learning indeed learns very slowly. While using a \qmet Q-function helps significantly, QRL still learns the $V^*$ structure much faster, and better captures the true target $V^*$ even after training concludes (\Cref{fig:dmc-valfn}).
% in both single-goal and multi-goal settings and captures the finer details much better.
In planning (\Cref{tab:dismntcar}), vanilla Q-Learning and (Q-Learning based) MSG struggle in multi-goal settings. While Q-Learning with \qmets achieves comparable planning performance with QRL,  the higher-quality $V^*$ estimate from QRL is likely important in more complex environments.
% \Cref{fig:dmc-valfn} shows that, after training concludes, QRL value functions still better capture the true target in both single-goal and multi-goal settings, which are likely important in more complex environments.
Furthermore, with continuous action spaces, Q-Learning requires a jointly learned actor, which (1) reduces to on-policy value learning and (2) can have complicated training dynamics as the actor's on-policy values may not be a \qmet (\Cref{thm:val-qmet-equiv}). QRL is exempt from such issues. In later sections with experiments on online learning in more complex environments, simply using \qmet in traditional value training indeed greatly underperforms QRL (\Cref{sec:online}).
%Additionally, QRL learns more conservative values

\myparagraph{Contrastive RL} uses an arguably similar contrastive objective. However, it samples positive pairs from the same trajectory, and does not enforce exact representation of local costs. Hence, it estimates the on-policy values that generated the data (random actor in this case). Indeed, \Cref{fig:dmc-valfn} shows that the Contrastive RL value functions mostly resemble that of a random actor, and fails to capture the boundaries separating states that have distinct values under \emph{optimal} actors. As shown in \Cref{tab:dismntcar}, this indeed leads to much worse control results.

\myparagraph{Ablations.} We highlight three ablation studies here: \begin{itemize}[itemsep=1pt]
    \item \textbf{Asymmetry.} QRL objective with symmetrical value functions underperforms QRL greatly, suggesting the importance of \qmetC{asymmetry from quasimetrics}.
    \item \textbf{Optimality.} Contrastive RL with \qmet can be seen as a method that uses \qmet to train \emph{on-policy} values. Thus, the learned values fail to capture optimal decision structures. QRL instead enforces \presC{consistency with observed local costs} and \maxiC{maximal spreading of states}, which leads to \optC{optimal values} and better performance.
    \item \textbf{QRL Objective.}
    While Q-Learning with \qmets plans comparably well here , it learns more slowly than QRL (\Cref{fig:dmc-traindyn}) and fails to capture finer value function details (\Cref{fig:dmc-valfn}).
    As discussed above, Q-Learning (with or without \qmets) also has potential issues with complex dynamics and/or continuous action space, while QRL does not have such problems and attain much superior performance in such settings (see later \Cref{sec:online}).
\end{itemize}

\begin{table}[t]%
\vspace{-6.25pt}
%!TEX root = ../main.tex
\centering
\resizebox{
  1.\linewidth
}{!}{%
\renewcommand\normalsize{\small}%
\normalsize%
\centering%
\newlength{\digitl}\settowidth{\digitl}{1}%
\newlength{\basel}\settowidth{\basel}{.1}%
\newlength{\tinydigitl}\settowidth{\tinydigitl}{\tiny1}%
\newlength{\tinybasel}\settowidth{\tinybasel}{\tiny.1}%
\newcommand{\mround}[3][3]{\makebox[\basel+3\digitl][r]{\round{#2}{2}}~{\tiny\textpm~\makebox[\tinybasel+#1\tinydigitl][r]{\round{#3}{2}}}}%
\newcommand{\sround}[2][3]{\makebox[\basel+3\digitl][r]{\round{#2}{2}}}%
\newcommand{\mroundprec}[1]{\round{#1}{2}\%\xspace}%
\newcommand{\bftab}{\fontseries{b}\selectfont}%
\renewcommand{\arraystretch}{1.2}%
\setlength{\tabcolsep}{0.2em} % for the horizontal padding
\setlength\extrarowheight{1pt}%
\setlength{\tabcolsep}{3.5pt}%
\renewcommand{\arraystretch}{1}%
% pots submission
\renewcommand{\arraystretch}{1.01}%
\setlength\extrarowheight{0.15pt}%
\setlength{\tabcolsep}{0.475em} % for the horizontal padding
\hspace*{-0.5em}\begin{tabular}[b]{@{}lccc@{}}
    \toprule
    \multicolumn{1}{c}{\multirow{2}{*}{\psm{}{\vspace{-3pt}{\textbf{Method}}}}}
    &  \multirow{2}{*}{\psm{\vspace{-3pt}Variant}{\vspace{-3pt}\shortstack{\textbf{Method}\\\textbf{Configuration}}}}
    &  \multicolumn{2}{c}{\textbf{Task}} \\[-0.5ex]
    \cmidrule{3-4}

    &
    &  Reach Top of Hill
    &  Reach $9$ States \\[-0.3ex]
    \midrule
    \midrule

    \multirow{2}{*}{\vspace{0pt}\shortstack[l]{QRL} }
    & Single-Goal
    % & \bftab\mround{98.1435196573486}{0.045053138692808}
    & \bftab\mround{97.68766021728516}{0.26373159885406494}  % LE
    & --- \\

    & Multi-Goal
    % & \bftab\mround{95.8021709379247}{0.250294062972744}
    % & \bftab\mround{85.977497990248}{2.44492548847578}
    & \bftab\mround{95.891357421875}{0.5464286208152771}  % LE
    & \bftab\mround{85.5451431274414}{3.5664360523223877}  % LE
    \\[0.0cm]
    \arrayrulecolor{notquitelightgray}\cdashlinelr{1-4}\arrayrulecolor{black}

    \multirow{2}{*}{\vspace{0pt}Q-Learning}
    & --- %Single-Goal
    & \bftab\mround{98.7430535635304}{0.189306012135506}
    & --- \\

    & + Relabel %Multi-Goal$^\dagger$
    & \mround{89.269095263583}{11.6941162165713}
    & \mround{22.0615959872626}{8.71599385937244} \\[0.0cm]
    \arrayrulecolor{notquitelightgray}\cdashlinelr{1-4}\arrayrulecolor{black}

    {\vspace{0pt}Contrastive RL}
    & \multicolumn{1}{c}{---}
    & \mround{83.9081399755776}{8.03959003054404}
    & \mround{53.752953947641}{32.927964407854} \\[0.0cm]
    \arrayrulecolor{notquitelightgray}\cdashlinelr{1-4}\arrayrulecolor{black}

    \multirow{2}{*}{\vspace{0pt}MSG}
    & --- %Single-Goal
    & \bftab\mround{97.4364789925259}{0.21819835570998}
    & --- \\

    & + Relabel %Multi-Goal$^\dagger$
    & \mround{14.3031188302866}{0}
    & \mround{37.8033740935794}{8.20463769421803} \\[0.01cm]
    \arrayrulecolor{notquitelightgray}\cdashlinelr{1-4}\arrayrulecolor{black}

    {\vspace{0pt}Diffuser}
    & \multicolumn{1}{c}{---}
    & \mround{19.7809331010749}{3.02736597248708}
    & \mround{36.4115145524481}{1.43582689181308} \\
    \arrayrulecolor{notquitelightgray}\midrule\arrayrulecolor{black}

    \multirow{2}{*}{\vspace{0pt}\shortstack[l]{QRL Objective\\[-0.2ex]with Symmetric $\ell_2$ Distance} }
    & Single-Goal
    % & \bftab\mround{95.1594265010833}{0.218261736747084}
    & \bftab\mround{95.42462921142578}{0.15999062359333038}
    & --- \\

    & Multi-Goal
    % & \bftab\mround{96.2228983311326}{0.307855468900237}
    & \bftab\mround{96.12943267822266}{0.12171199172735214}  % LE
    % & \mround{73.2260385550671}{1.69272806370617} 
    & \mround{73.2666015625}{0.8376898765563965}  % LE
    \\[0.0cm]
    \arrayrulecolor{notquitelightgray}\cdashlinelr{1-4}\arrayrulecolor{black}

    \multirow{2}{*}{\vspace{0pt}\shortstack[l]{Contrastive RL\\[-0.2ex] + Quasimetric Q-Function}}
    & \multicolumn{1}{c}{\multirow{2}{*}{---}}
    & \multirow{2}{*}{\mround{83.895044357837}{8.72748304931281}}
    & \multirow{2}{*}{\mround{72.2757147936469}{4.63357324608026}} \\

    &
    &
    & \\[0.0cm]
    \arrayrulecolor{notquitelightgray}\cdashlinelr{1-4}\arrayrulecolor{black}

    % \multirow{2}{*}{\vspace{0pt}\shortstack[l]{Q-Learning\\ + Quasimetric Q-Function$^\dagger$}}
    \multirow{2}{*}{\vspace{0pt}\shortstack[l]{Q-Learning\\[-0.2ex] + Quasimetric Q-Function}}
    % & \multicolumn{1}{c}{\multirow{2}{*}{---}}
    & \multicolumn{1}{c}{\multirow{2}{*}{+ Relabel}}
    & \bftab\multirow{2}{*}{\mround{96.33378290781006}{0.3700593792992688}}
    & \bftab\multirow{2}{*}{\mround{85.528737808177}{3.6851571962976024}} \\

    &
    &
    & \\
    \midrule

    {\vspace{0pt}Oracle (Full Dynamics)}
    & \multicolumn{1}{c}{---}
    & \sround{100}
    & \sround{100} \\[0.38ex]

    {\vspace{0pt}Oracle (Dataset Transitions)}
    & \multicolumn{1}{c}{---}
    & \sround{69.2188381401503}
    & \sround{75.8941901} \\

    \bottomrule
\end{tabular}%
}%
\vspace{-7pt}
\caption{\psm{Planning results}{Control results} on \mntcar. Scores  \psm{represent the return}{are normalized returns}   to reach the desired goal within $200$ steps, averaged across all $160\times 160$ starting states. Each row shows \psm{different evaluations}{evaluations} of a method in a specific configuration\psm{}{\xspace with standard deviations from $5$ seeds}.
% \psm{}{\textdagger\ methods apply relabelling with the ground truth reward function.}
We highlight results that are $\geq 95\%$ of the best method.
}%
\label{tab:dismntcar}%
\vspace{-2.5pt}
% \vspace{-19pt}
\end{table}

Compared to existing approaches, QRL efficiently and accurately finds optimal goal-conditioned value functions, showing the importance of both the \qmet structure and the novel learning objective. In the next section, we describe extensions of QRL, followed by more extensive experiments on offline and online goal-reaching benchmarks in \Cref{sec:expr}.

% \subsection{Learning Transition, Q-Function and Policy}\label{sec:qrl-impl-extension}
\vspace{-5.5pt}\subsection{From $V^*$ to $Q^*$ and Policy}\label{sec:qrl-impl-extension}

QRL's optimal value $V^*$ estimate may be used directly in planning to control an agent. A more common approach is to train a policy network \wrt to a Q-function estimate \citep{hafner2019dream}. This section describes simple extensions to QRL that learn the \optC{optimal} Q-function $Q^*$ and a policy.

\myparagraph{Transition and Q-Function Learning.} We augment the \qmet model $d_\theta$ to include an \emph{encoder} $f \colon \mathcal{S} \rightarrow \mathcal{Z}$: \begin{equation}
    d_{\theta = (\theta_1, \theta_2)}(s_0, s_1) \trieq d^z_{\theta_1}(f_{\theta_2}(s_0), f_{\theta_2}(s_1)).
\end{equation}
% which does not affect $d^z_{\theta_1}$'s approximation properties.

Since $d_\theta$ captures $V^*$, finding the Q-function $Q^*(s, a; g)$ only requires knowing the transition result, which we model by a learned latent transition $T \colon \mathcal{Z}\times\mathcal{A} \rightarrow \mathcal{Z}$.
% 
% We thus train an additional latent transition model $T \colon \mathcal{Z}\times\mathcal{A} \rightarrow \mathcal{Z}$ \wrt an \qmet-aware objective. 
In this section, for notation simplicity, we will drop the   $()_{\theta_*}$ subscript, and use $z \trieq f(s)$, $z'\trieq f(s')$, $\hat{z}' \trieq T(z, a)$, and $z_g \trieq  f(g)$.

Once with a well trained $T$, we can estimate $Q^*(s,a;g)$ as\begin{equation}
    % d^z( \underbracket{T(z, a)}_{\mathclap{\textsf{latent transition}}}, f(g)) \approx -Q^*(s,a;g). \label{eq:q-est}
    \smash{d^z( \underbracket{T(z, a)}_{\mathllap{\textsf{latent tran}}\mathrlap{\textsf{sition}}}, z_g)  \underbracket{\vphantom{T(z,a)}{} -r}_{\mathllap{\textsf{trans}}\mathrlap{\textsf{ition cost}}}
     = d^z( \hat{z}', z_g) -r}
    \approx -Q^*(s,a;g).
    \label{eq:q-est}
\end{equation}
$\vphantom{2^{2^{2^{2}}}}$(In our experiments, transition cost $-r$ is a constant, and thus omitted. Generally, $T$ can be extended to estimate $r$.) 

\vspace{2pt}
\emph{Transition loss.} Given transition $(s, a, s')$, we define: \begin{align}
    &\notag\\[-17pt]
    \mathcal{L}_{\mathsf{transition}}(s, a, s'; T, d_\theta)
    &\trieq
    \frac{1}{2} \big( d^z(\hat{z}', z')^2 + d^z(z',\hat{z}')^2 \big),\notag\\[-18.5pt]\notag
\end{align}
which is used to optimize both $d_\theta$ and $T$ in conjunction with the QRL objective in \Cref{eq:qrl-full}.

$\mathcal{L}_{\mathsf{transition}}$ encourages the predicted next latent $\hat{z}'$ to be close to the actual next latent $z'$ \wrt the learned \qmet function $d^z$. This is empirically superior to a simple regression loss on $\mathcal{Z}$, whose scale is meaningless.

More importantly, the \qmet properties allow us to directly relate $\mathcal{L}_{\mathsf{transition}}$ values to Q-function error:

Suppose $d^z(\hat{z}', z')^2 + d^z(z',\hat{z}')^2 \leq \delta^2$, which means \begin{equation}
    d^z(\hat{z}', z') \leq \delta \text{  and  } d^z(z', \hat{z}') \leq \delta.
\end{equation}
\emph{For any} goal $g$\xspace with latent $z_g $, the triangle inequality implies \begin{align*}
    % |{\underbracket{d^z(\hat{z}', z_g)}_{\mathllap{\textsf{estimated }}Q^*(s,a;g)} - \underbracket{d_\theta(s', g)}_{\textsf{estimated }\mathrlap{V^*(s'; g)}}}|
    |{\underbracket{d^z(\hat{z}', z_g)}_{\mathllap{\textsf{estimated}}\textsf{ }Q^*(s,\mathrlap{a;g)}} - \underbracket{d_\theta(s', g)}_{\mathllap{\textsf{estima}}\mathrlap{\textsf{ted }V^*(s'; g)}}}|
    = |d^z(\hat{z}', z_g) - d^z(z', z_g)| 
    &
    \leq \delta.
    % \underbracket{d^z(\hat{z}', z_g)}_{\mathclap{\textsf{estimated $Q^*(s,a;g)$}}} &\leq \hphantom{-}\hspace*{-2pt}d^z(\hat{z}', z) + d^z(z', z_g) \leq \hphantom{-}\delta + \underbracket{d_\theta(s', g)}_{\mathclap{\textsf{estimated $V^*(s'; g)$}}} \\
    % d^z(\hat{z}', z_g) &\geq -d^z(z, \hat{z}') + d^z(z', z_g) \geq -\delta + d_\theta(s', g).
\end{align*}
In other words, if $d_\theta$ accurately estimates $V^*$, our estimated $Q^*(s,a;g)$ has bounded error, \emph{for any} goal $g$, even though we train with a local objective $\mathcal{L}_\mathsf{transition}$. Hence, simply training the transition loss \emph{locally} ensures that Q-function error is bounded \emph{globally}, thanks to using \qmetC{quasimetrics}.

Based on this argument, our theoretical guarantees for recovering $V^*$ (\Cref{thm:qrl-exact-recovery,thm:qrl-func-approx}) can be potentially extended to $Q^*$ and thus to optimal policy. We leave this as future work.

\vspace{2pt}

\myparagraph{Policy Learning.} We train policy $\pi \colon \mathcal{S} \rightarrow \Delta(\mathcal{A})$ to maximize the estimated Q-function (\Cref{eq:q-est}): \begin{equation}
    \min_{\pi} \mathbb{E}_{\substack{s\sim p_\mathsf{state}\\g \sim p_\mathsf{goal}}}[ d^z(T(f(s), a), f(g)) ].
\end{equation}
Additionally, we follow standard RL techniques, training two critic functions and optimizing the policy to maximize rewards from the minimum of them \citep{fujimoto2021minimalist,eysenbach2022contrastive}. In online settings, we also use an adaptive entropy regularizer \citep{haarnoja2018soft}.

%!TEX root = ../main.tex

\section{Related Work}
\vspace{-1.5pt}\myparagraph{Contrastive Approaches to RL.}
As discussed in \Cref{sec:introduction}, our objective bears similarity to those of contrastive approaches. However, we also differ with them in that we rely on (1) \qmetC{quasimetric models}, (2) \presC{consistency with observed local costs}, and (3) \maxiC{maximal spreading of states} to learn the optimal value function.  
Most contrastive methods satisfy none of these properties, and instead pull together states sampled from the same trajectory for capturing on-policy value/information \citep{eysenbach2022contrastive,ma2022vip,sermanet2018time,oord2018representation}. \citet{yang2020plan2vec} ensures exact representation of local cost, but also enforces non-adjacent states to have distance $2$ via a \emph{metric} function, and thus cannot learn optimal values.  Another related line of work trains contrastive models to estimate the alignment between current state and some abstract goal (\eg, text), which are then used as reward for RL training \citep{fan2022minedojo}. Despite the similar goal-reaching setting, their trained model is potentially sensitive to training data, and estimates a density ratio rather than the optimal cost-to-go.

\vspace{-1.25pt}\myparagraph{\Qmet Approaches to RL.}
\citet{micheli2020multi} consider using \qmets for multi-task planning, but does not use models that enforce \qmet properties. \citet{liu2022metric} use \qmet models to parametrize the Q-function, and shows improved performance with DDPG \citep{lillicrap2015ddpg} and HER \citep{andrychowicz2017hindsight} on goal-reaching tasks. These prior works mostly only estimate \emph{on-policy} value functions, and rely on iterative policy improvements to train policies. \citet{zhang2020generating} use a similar \qmet definition, but does not use \qmet models and focuses on hierarchy learning. In contrast, our work utilizes the full \qmet geometry to directly estimate $V^*$ and produce high-quality goal-reaching agents. Additionally, the Wasserstein-1 distance induced by the MDP dynamics is also a \qmet.  \citet{durugkar2021adversarial} utilize its dual form to derive a similar training objective for reward shaping, but essentially employ a different 1-dimensional Euclidean geometry for each goal state and forgo much of the \qmet structure in $V^*$. 

\vspace{-1.25pt}\myparagraph{Metrics and Abstractions in RL.}
Many works explored learning different state-space geometric structures. In particular, bisimulation metric also relates to optimality, but is defined for single tasks where its metric distance bounds the value difference \citet{castro2020scalable,ferns2014bisimulation,zhang2020learning}. Generally speaking, any state-space abstraction can be viewed as a form of distance structure, including state embeddings that are related to value functions \citep{schaul2015universal,bellemare2019geometric}, transition dynamics \citep{mahadevan2007proto,lee2020predictive}, factorized dynamics \citep{fu2021learning,wang2022denoisedmdps}, \etc. While our method also uses an encoder, our focus is to learn a \qmet that directly outputs the optimal value $V^*$ to reach any goal, rather than bounding it for a single task.
%!TEX root = ../main.tex

% \input{figtext/05_experiments/robodesk_joint_pos}
% \input{figtext/05_experiments/robodesk_rew}

\begin{table*}[t]%
\vspace{-1pt}
\centering
\resizebox{
  1.\linewidth
}{!}{%
\centering
\renewcommand\normalsize{\small}%
\normalsize%
\centering%
\let\digitl\relax
\newlength{\digitl}\settowidth{\digitl}{1}
\let\basel\relax
\newlength{\basel}\settowidth{\basel}{.1}
\let\tinydigitl\relax
\newlength{\tinydigitl}\settowidth{\tinydigitl}{\tiny1}
\let\tinybasel\relax
\newlength{\tinybasel}\settowidth{\tinybasel}{\tiny.1}
\newcommand{\mround}[3][3]{\makebox[\basel+3\digitl][r]{\round{#2}{2}}~{\tiny\textpm~\makebox[\tinybasel+#1\tinydigitl][r]{\round{#3}{2}}}}%
\newcommand{\sround}[2][3]{\makebox[\basel+3\digitl][r]{\round{#2}{2}}}%
\newcommand{\mroundprec}[1]{\round{#1}{2}\%\xspace}%
\newcommand{\bftab}{\fontseries{b}\selectfont}%
\renewcommand{\arraystretch}{1.28}%
\setlength{\tabcolsep}{0.22em} % for the horizontal padding
\setlength\extrarowheight{1pt}
\setlength{\tabcolsep}{4pt}
\renewcommand{\arraystretch}{1.25}%
\hspace*{-17.5pt}\begin{tabular}[b]{c@{\hspace*{7pt}}c@{\hspace*{17pt}}ccccccccc}
    \toprule
    &  \multirow{2}{*}{\textbf{Environment}}
    &  \multirow{2}{*}{QRL}
    &  \multirow{2}{*}{Contrastive RL}
    % &  \multirow{2}{*}{\shortstack{MSG\\(\#ensemble $= 4$)}}
    &  \multirow{2}{*}{\shortstack{MSG\\(\#critic $= 64$)}}
    &  \multirow{2}{*}{\shortstack{MSG + HER\\(\#critic $= 64$)}}
    &  \multirow{2}{*}{\shortstack{MPPI with \\GT Dynamics}}
    &  \multirow{2}{*}{\shortstack{MPPI with\\QRL Value}}
    &  \multirow{2}{*}{Diffuser}
    &  \multirow{2}{*}{\shortstack{Diffuser with\\QRL Value Guidance}}
    &  \multirow{2}{*}{\color{gray}\shortstack{Diffuser with\\Handcoded Controller}}
    \\
    % &  \multirow{2}{*}{\shortstack{\dFOURrl Reference\\Controller}} \\%[-0.5ex]

    % &
    &
    &
    &
    &
    &
    &
    &
    &
    &
    & \\\midrule

    \multirow{4}{*}{\vspace{-7pt}Single-Goal}
    &
    \texttt{large}
    & % QRL
% \bftab\mround{185.2617007744397}{28.45882427966159}  % NO LE
\bftab\mround{191.517078828239}{18.27651611406777}  % LE
    & % CRL
\mround{81.65138987616447}{43.79214809198385}
    & % MSG n64
\mround{159.3}{49.4}
    & % MSG-L n64
\mround{59.26371383666992}{46.702842712402344}
    & % MPPI
5.1
    & % MPPI QRL
% \mround{4.669835090637207}{5.3055853843688965}
\mround{19.315349578857422}{22.971527099609375}
    & % Diffuser
\mround{7.976355269557409}{1.5414569161063014}
    & % Diffuser QRL
% \mround{11.334506034851074}{1.484433650970459}
\mround{10.083429982416101}{2.974629708996291}
    & % Diffuser Cheat
\color{gray}\mround{128.13348797186578}{2.5903789139245683}
    % & % d4rl
    % 100
    \\
    
    &
    \texttt{medium}
    & % QRL
% \bftab\mround{148.48406872019982}{46.75364907654691}  % NO LE
\bftab\mround{163.58964656020586}{9.695370264565597}  % LE
    & % CRL
\mround{10.10595625520321}{0.9892025141165419}
    & % MSG n64
\mround{57.0}{17.2}
    & % MSG-L n64
\mround{75.76554870605469}{9.024413108825684}
    & % MPPI
10.2
    & % MPPI QRL
% \mround{60.892303466796875}{40.377410888671875}
\mround{58.06024932861328}{42.78642654418945}
    & % Diffuser
\mround{9.482328010292893}{2.2146122134102}
    & % Diffuser QRL
% \mround{10.523727416992188}{3.260510206222534}
\mround{10.712934231438734}{4.58541982164502}
    & % Diffuser Cheat
\color{gray}\mround{127.63641867857412}{1.4687256627527425}
    % & % d4rl
    % 100
    \\
    
    &
    \texttt{umaze}
    & % QRL
% \mround{47.402362147670445}{23.7174262839025}  % NO LE
\mround{71.71799144989491}{26.21098647907585}  % LE
    & % CRL
\mround{95.10760089848559}{46.22730017235405}
    & % MSG n64
\bftab\mround{101.1}{26.3}
    & % MSG-L n64
\mround{55.640899658203125}{31.819015502929688}
    & % MPPI
33.2
    & % MPPI QRL
% \mround{45.880733489990234}{9.321934700012207}
\mround{74.84674835205078}{21.304964065551758}
    & % Diffuser
\mround{44.0286935729295}{2.24621217256269}
    & % Diffuser QRL
% \mround{42.19404220581055}{4.22982120513916}
\mround{42.30128251575972}{3.8693173999210875}
    & % Diffuser Cheat
\color{gray}\mround{113.91203535975652}{3.266293021302733}
    % & % d4rl
    % 100
    \\\cmidrule{2-11}
    
    % \multicolumn{2}{c}{\textbf{Single-Goal Average}}
    &
    \textbf{Average}
    & % QRL
% \bftab\sround{127.0493772}  % NO LE
\bftab\sround{142.27490561277992}  % LE
    & % CRL
    \sround{62.28831568}
    & % MSG n64
    \sround{105.8}
    & % MSG-L n64
    \sround{63.55672073}
    & % MPPI
\sround{16.16666667}
    & % MPPI QRL
% \sround{37.14762402}
\sround{50.74078369140625}
    & % Diffuser
\sround{20.49579228}
    & % Diffuser QRL
% \sround{21.35075855}
\sround{21.032548909871522}
    & % Diffuser Cheat
\color{gray}\sround{123.227314}
    % & % d4rl
    % 100
    \\[0.1ex]\midrule

    \multirow{4}{*}{\vspace{-7pt}Multi-Goal}
    &
    \texttt{large}
    & % QRL
% \bftab\mround{199.19338546148373}{4.066102449523602}  % NO LE
\bftab\mround{187.71297093045004}{7.620717460809017}  % LE
    & % CRL
\mround{172.63945527329864}{5.1278449218848285}
    & % MSG n64
---
    & % MSG-L n64
\mround{44.566566467285156}{25.30406379699707}
    & % MPPI
8
    & % MPPI QRL
% \mround{54.04167938232422}{7.47212028503418}
\mround{37.73055648803711}{16.667526245117188}
    & % Diffuser
\mround{13.089902353249277}{0.9958638457454098}
    & % Diffuser QRL
% \mround{21.783082962036133}{2.862971544265747}
\mround{21.25855812039358}{2.950895757729535}
    & % Diffuser Cheat
\color{gray}\mround{146.94376894010253}{2.5005206039895898}
    % & % d4rl
    % 100
    \\
    
    &
    \texttt{medium}
    & % QRL
% \bftab\mround{161.90796942405203}{8.09967207330522}  % NO LE
\bftab\mround{150.51464466812988}{3.773475440502487}  % LE
    & % CRL
\mround{137.00749262090363}{6.263057821471228}
    & % MSG n64
---
    & % MSG-L n64
\mround{99.75630187988281}{9.831496238708496}
    & % MPPI
15.4
    & % MPPI QRL
% \mround{71.23817443847656}{6.694705009460449}
\mround{56.7864990234375}{7.66044282913208}
    & % Diffuser
\mround{19.20986906834178}{3.5592153930164714}
    & % Diffuser QRL
% \mround{33.6751708984375}{2.8184902667999268}
\mround{33.39060016650269}{2.7825231494029152}
    & % Diffuser Cheat
\color{gray}\mround{119.97048361462195}{1.2246455934857097}
    % & % d4rl
    % 100
    \\
    
    &
    \texttt{umaze}
    & % QRL
% \mround{134.112020868053}{12.562786132684867}  % NO LE
\bftab\mround{150.60213028041446}{5.322403596419383}  % LE
    & % CRL
% \bftab
\mround{142.42880950655749}{11.989339878072997}
    & % MSG n64
---
    & % MSG-L n64
\mround{27.897979736328125}{10.38830852508545}
    & % MPPI
41.2
    & % MPPI QRL
% \mround{84.72283935546875}{7.694775104522705}
\mround{87.49366760253906}{9.719127655029297}
    & % Diffuser
\mround{56.21911455691615}{3.89816426611298}
    & % Diffuser QRL
% \mround{69.49061584472656}{3.851199150085449}
\mround{69.9586986450257}{2.3887805095746732}
    & % Diffuser Cheat
\color{gray}\mround{128.5298166799507}{0.9983784596147208}
    % & % d4rl
    % 100
    \\\cmidrule{2-11}
    
    % \multicolumn{2}{c}{\textbf{Single-Goal Average}}
    &
    \textbf{Average}
    & % QRL
% \bftab\sround{165.0711253}  % NO LE
\bftab\sround{162.94324862633144}
    & % CRL
    \sround{150.6919191}
    & % MSG n64
---
    & % MSG-L n64
    \sround{57.40694936}
    & % MPPI
\sround{21.53333333}
    & % MPPI QRL
% \sround{70.00089773}
\sround{60.67024230957031}
    & % Diffuser
\sround{29.50629533}
    & % Diffuser QRL
% \sround{41.64962324}
\sround{41.53595231064067}
    & % Diffuser Cheat
\color{gray}\sround{131.8146897}
    % & % d4rl
    % 100
    \\[0.1ex]
    \bottomrule
\end{tabular}%
}%
\vspace{-5pt}
\caption{Planning results on \maze. Scores represent average normalized episode return, where $100$ represents comparable performance with the \dFOURrl reference handcoded controller. Each column show evaluations of the same method configuration. \Eg, we train goal-reaching QRL agents and evaluate them in both single-goal and multi-goal settings. We highlight results that are $\geq 95\%$ of the best method. \psm{QRL agents significantly outperform baselines in both evaluations.}{In both evaluations, QRL agents significantly outperform baselines, including MSG + HER with the ground truth reward function, and MPPI with the ground truth environment dynamics.} QRL value functions can also be used with planning methods (MPPI) or trajectory sampling methods (Diffuser), and improve their performances.
MPPI with GT Dynamics scores are copied from \citet{janner2022diffuser}.
}%
\label{tab:maze2d}%
\vspace{-5pt}
\end{table*}

\begin{figure*}[ht]
    \centering
    \vspace{-2pt}
    \includegraphics[scale=0.417, trim=5 0 0 0, clip]{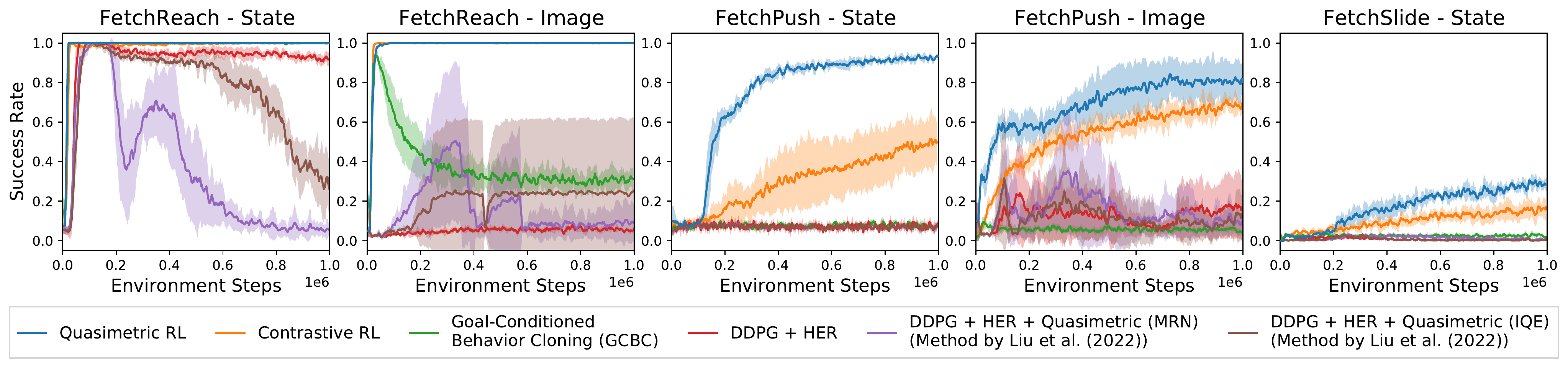}%
    \vspace{-8.5pt}
    \caption{Online learning performance on GCRL benchmarks. No method has access to ground truth reward function. QRL learns faster and better than the baseline methods across all environments for both state-based and image-based observations. %We are unable to get any method to learn on \texttt{FetchSlide} with image observation.
    }
    \label{fig:online}%
    % \vspace{-3.75pt}
\end{figure*}

% \vspace{-4pt}
\section{Benchmark Experiments}\label{sec:expr}

We evaluate QRL learned policies on standard goal-reaching benchmarks in both offline and online settings. All results show means and standard deviations from $5$ seeds. See \Cref{sec:expr-details} for all experiment details.

\subsection{Offline \postsubmission{Multi-Goal}{Goal-Reaching} \dFOURrl \maze}\label{sec:maze2d}

Following Diffuser \citep{janner2022diffuser}, we use  \maze environments from \dFOURrl \citep{fu2020d4rl}, and evaluate the learned policies' performance in (1) reaching the original fixed single goal defined in \dFOURrl as well as (2) reaching goals randomly sampled from the state space. 
Similar to many offline works (\eg, Contrastive RL \citep{eysenbach2022contrastive}), we adopt an additional behavior cloning loss for QRL policy optimization in this offline setting.

\myparagraph{QRL is a strong method for offline goal-reaching RL.}
In \Cref{tab:maze2d}, QRL significantly outperforms all baselines in both single-goal and multi-goal settings. MSG uses a $64$-critic ensemble and is computationally expensive. With only $2$ critics, QRL outperforms MSG by $20\%$ on single-goal tasks and $188\%$ on multi-goal tasks. The Diffuser original paper reported results from a handcoded controller with sampled states as input waypoints. We also report planning using Diffuser's sampled actions, which attains a much worse result. Regardless, QRL outperforms both Diffuser settings, without using any external information/controller. Compared with Contrastive RL, QRL again sees a big improvement, especially in the single-goal setting. Since the dataset is not generated by agents trying to reach that goal, the on-policy values estimated by Contrastive RL are likely much worse than the optimal values from QRL.

\myparagraph{QRL learned value function improves planning and trajectory sampling methods.}
Given the high quality of QRL value functions, we can use it to improve other methods. MPPI \citep{williams2015model} is a model-based planning method. When planning with QRL Q-function, MPPI greatly improves over using ground truth dynamics.  We also experiment using QRL Q-function to guide Diffuser's goal-conditioned sampling, and obtain consistent and non-trivial improvements, especially in multi-goal settings.

\vspace{-1.25pt}\subsection{Online Goal-Reaching RL}\label{sec:online}

Following Contrastive RL \citep{eysenbach2022contrastive} and Metric Residual Networks (MRN; \citet{liu2022metric}), we use the \texttt{Fetch} robot environments from the GCRL benchmark \citep{plappert2018multi}, where we experiment with both state-based observation as well as image-based observation. 

\myparagraph{QRL quickly achieves high performance in online RL.} Across all environments, QRL exhibits strong sample-efficiency, and learns the task much faster than the alternatives. Only QRL and Contrastive RL learn in the two more challenging state-based settings, \texttt{FetchPush} and \texttt{FetchSlide}. Compared to Contrastive RL, QRL has $4.9 \times$ sample efficiency on state-based \texttt{FetchPush} and $2.7\times$ sample efficiency on state-based \texttt{FetchSlide}. 
Strictly speaking, image-based observation only contains partial information of the true state, and thus has stochastic dynamics, which violates the assumption of QRL. However, QRL still shows strong performance on image-based settings, suggesting that QRL can potentially also be useful in other partially observable and/or stochastic environments.

\myparagraph{QRL outperforms Q-Learning with \qmet models in complex environments.}
Following the approach by \citet{liu2022metric}, we train standard DDPG \citep{lillicrap2015ddpg} with relabelling and a \qmet model Q-function. Essentially, this jointly optimizes a \qmet Q-function with Q-Learning and a deterministic policy \wrt the Q-function.  While similar approaches worked well on the simple \mntcar environment (\Cref{sec:disc-mnt-car}), they fail miserably here on more complex continuous-control settings, as Q-Learning must estimate \emph{on-policy} Q-function that may not be a \qmet (\Cref{thm:val-qmet-equiv}). DDPG with \qmets are the slowest to learn on state-based \texttt{FetchReach}, and generally are among the least-performing methods. The same pattern holds for two different \qmet models: IQE and MRN (proposed also by \citet{liu2022metric}). In comparison, QRL (which also uses IQE in our implementation) quickly learns the tasks. QRL is more general and scales far better than simply using Q-Learning with \qmets.
%!TEX root = ../main.tex

% \vspace{-2pt}
\section{Implications}

In this work, we introduce a novel RL algorithm, QRL, that utilizes the equivalence between optimal value functions and \qmets. In contrast to most RL algorithms that optimize generic function classes, QRL is designed for using \qmetC{quasimetric models} to parametrize value functions. Combining \qmet models with an objective that \presC{captures local distances} and \maxiC{maximally spreads out states} (\Cref{sec:qrl-opt-intuition}), QRL provably recovers the \optC{optimal} value function (\Cref{sec:theory}) without temporal-difference or policy iteration, making it distinct from many prior approaches.

From thorough analyses on \mntcar, we empirically confirm the importance of different components in QRL, and observe that QRL can learn value functions faster and better than alternatives (\Cref{sec:disc-mnt-car}). Our experiments on additional benchmarks echo these findings, showing better control results in both online and offline settings (\Cref{sec:expr}). QRL can also be used to directly improve other RL methods, and demonstrates strong sample efficiency in online settings. 

These QRL results highlight the usefulness of \qmets in RL, as well as the benefit of incorporating \qmet structures into \emph{designing} RL algorithms. 

Below we summarize several exciting future directions.

\myparagraph{QRL as Representation and World Model Learning.} QRL can be also viewed as learning a decision-aware representation (via encoder $f$) and a latent world model (via latent dynamics $T$). In this work, for fair comparison, we did not utilize such properties much. However, combining QRL with techniques from these areas (\eg, estimating multi-step return, auxiliary loss training) may yield even stronger performances and/or more general QRL variants (\eg, better support for partial observability and stochasticity).  

\myparagraph{\Qmet Structures in Searching and Exploration.} QRL results show that \qmets can flexibly model distinct environments and greatly boost sample efficiency. Such learned (asymmetrical) state-space distances potentially have further uses in long-range planning and exploration. A \presC{locally distance-preserving} \qmet is always a \emph{consistent and admissible} heuristic \citep{pearl1984heuristics}, which guarantees optimality in search algorithms like A* \citep{hart1968formal}. Perhaps such exploration ideas may be incorporated in a \qmet-aware actor, or even for solvers of general searching and planning problems. 
% Additionally, one may use uncertainty/errors in learned \qmet distances/dynamics to derive new intrinsic exploration methods.

\myparagraph{Better Exploration for Structure Learning.} In our and most RL works, online exploration is done via noisy actions from the learned policy. Arguably, if an agent is learning the structure of the environment, it should instead smartly and actively probe the environment to improve its current estimate. Consider QRL as an example. If current \qmet estimate $d_\theta(s_0, s_1)$ is small but no short path connecting $s_0$ to $s_1$ was observed, the agent should test if they are actually close \wrt the dynamics. Additionally, one may use uncertainty/errors in learned \qmet distances/dynamics to derive new intrinsic exploration methods.
Such advanced exploration may speed up learning  the geometric structures of the world, and thus better generalist agents.

\myparagraph{More \Qmet-Aware RL Algorithms.} To our best knowledge, QRL is the first RL method designed for \qmet models. We hope the strong performance of QRL can inspire more work on RL algorithms that are aware of \qmet and/or other geometric structures in RL.

\section*{Acknowledgements}
TW's contribution in this material is based on work that is partially funded by Google. TW was supported in part by ONR MURI grant N00014-22-1-2740.

\clearpage\newpage
\hide{
\color[HTML]{719456}
\section{Changelog post ICML 23 submission}

All added/changed texted are indicated \psm{}{with this color}.
\begin{itemize}
    \item  \textbf{01/26/2023: TZ:} Submission.

    Submission version is labeled as \texttt{submission} on Overleaf.
    \item  \textbf{01/31/2023: TZ:} \begin{itemize}
        \item Improved experiment captions and text.
        \item Greatly improved transition learning section, including fixing missing definitions...
        \item Improved appendix description of experimental details.

        \item Augmented \Cref{thm:val-qmet-equiv} to say also that on-policy values may not be a \qmet.

        \item Simplified proof for \Cref{thm:qrl-func-approx,thm:qrl-func-approx-formal}
    \end{itemize}

    \item \textbf{02/13/2023: TZ:} \begin{itemize}
        \item Reduce confusion of self-transitions in main text (\eg, \Cref{fig:qrl-push-pull}). Add discussion in appendix.
        \item Added explanations across the main text.
    \end{itemize}
\end{itemize}

\section{To-edit}\begin{enumerate}
    \item More improvements on experimental details on how to sample batches and/or negative pairs.
\end{enumerate}

\section{To-investigate}

A list of things TZ is going to work on to understand more about QRL:
\begin{enumerate}
    \item In \Cref{fig:dmc-valfn}, why are QRL values overly optimistic on OOD states, compared to Q-Learning + \qmet. Specifically, QRL estimated values $V^*(s; g)$ for $s \notin \mathsf{training\ dataset}$ is quite high (\ie, distance is small, color is blue), causing a very obvious boundary corresponding to dataset coverage.

    If the \qmet model $d_\theta$ is smooth, then ideally it should extrapolate well and lead to smooth predicted distances/values, just like Q-Learning + \qmet.

    Maybe it is just different extrapolation, since QRL extrapolates the bottom left corner smoothly. But what is going on with the top right corner. If it is overfitting, where is the overfitting coming from? Some hypothesis:
    \begin{itemize}
        \item Model capacity: network capacity.

        TZ trained smaller model, still saw same boundary.

        \item Model capacity: \qmet head capacity.

        TZ trained model w/ a \qmet head consisting of less components, still saw same boundary.

        \item Model capacity: latent dimension.

        TZ trained model w/ half latent dimensionality (256 -> 128), still saw same boundary. But should try even smaller since state is just 3-dimensional.

        \item Model capcity: latent regularizeration.

        TZ trained model with latent space being l2-normalized, still saw same boundary.

        \item Regularization: weight decay.

        TZ trained model w/ weight decay, still saw some boundary.

        \item Q-Learning uses target network to form a regression-like objective, which may lead to better extrapolation. Does something similar work for QRL?

        Not tried.
    \end{itemize}

    \item How to make antmaze work?

    \item How to make FetchSlide work?

    \item How to make FetchPush (Image) work better?

    After long training, QRL plateaus while Contrastive RL eventually surpasses  QRL. Why is QRL not doing better? Is it the determinism assumption?

    \item TZ did two different ways to add \qmets to Q-Learning: \begin{itemize}
        \item In \mntcar, TZ used a transition model to get Q-values, trained using QRL's transition loss.
        \item In online experiments, TZ followed MRN's approach \citep{liu2022metric} to do it.
        \item In TZ's prior work \citep{wang2022learning}, TZ did a third way: embedding state-action pairs.
    \end{itemize}
    2nd approach failed in online. Does 1st also fail in online? Want to verify.

    Also, 3rd approach is probably the most straight forward way to add \qmets to Q-Learning (since 1st involves part of QRL's algorithm). Think about if we should use that for comparison in \mntcar instead.
\end{enumerate}
}

% In the unusual situation where you want a paper to appear in the
% references without citing it in the main text, use \nocite
% \nocite{langley00}
\bibliography{reference}

\begin{thebibliography}{58}
\providecommand{\natexlab}[1]{#1}
\providecommand{\url}[1]{\texttt{#1}}
\expandafter\ifx\csname urlstyle\endcsname\relax
  \providecommand{\doi}[1]{doi: #1}\else
  \providecommand{\doi}{doi: \begingroup \urlstyle{rm}\Url}\fi

\bibitem[Andrychowicz et~al.(2017)Andrychowicz, Wolski, Ray, Schneider, Fong,
  Welinder, McGrew, Tobin, Pieter~Abbeel, and
  Zaremba]{andrychowicz2017hindsight}
Andrychowicz, M., Wolski, F., Ray, A., Schneider, J., Fong, R., Welinder, P.,
  McGrew, B., Tobin, J., Pieter~Abbeel, O., and Zaremba, W.
\newblock Hindsight experience replay.
\newblock \emph{Advances in neural information processing systems}, 30, 2017.

\bibitem[Bellemare et~al.(2019)Bellemare, Dabney, Dadashi, Ali~Taiga, Castro,
  Le~Roux, Schuurmans, Lattimore, and Lyle]{bellemare2019geometric}
Bellemare, M., Dabney, W., Dadashi, R., Ali~Taiga, A., Castro, P.~S., Le~Roux,
  N., Schuurmans, D., Lattimore, T., and Lyle, C.
\newblock A geometric perspective on optimal representations for reinforcement
  learning.
\newblock \emph{Advances in neural information processing systems}, 32, 2019.

\bibitem[Castro(2020)]{castro2020scalable}
Castro, P.~S.
\newblock Scalable methods for computing state similarity in deterministic
  markov decision processes.
\newblock In \emph{Proceedings of the AAAI Conference on Artificial
  Intelligence}, volume~34, pp.\  10069--10076, 2020.

\bibitem[Chen et~al.(2021)Chen, Lu, Rajeswaran, Lee, Grover, Laskin, Abbeel,
  Srinivas, and Mordatch]{chen2021decision}
Chen, L., Lu, K., Rajeswaran, A., Lee, K., Grover, A., Laskin, M., Abbeel, P.,
  Srinivas, A., and Mordatch, I.
\newblock Decision transformer: Reinforcement learning via sequence modeling.
\newblock \emph{Advances in neural information processing systems},
  34:\penalty0 15084--15097, 2021.

\bibitem[Denardo(1970)]{denardo1970linear}
Denardo, E.~V.
\newblock On linear programming in a markov decision problem.
\newblock \emph{Management Science}, 16\penalty0 (5):\penalty0 281--288, 1970.

\bibitem[D'Oro et~al.(2023)D'Oro, Schwarzer, Nikishin, Bacon, Bellemare, and
  Courville]{doro2022sample}
D'Oro, P., Schwarzer, M., Nikishin, E., Bacon, P.-L., Bellemare, M.~G., and
  Courville, A.
\newblock Sample-efficient reinforcement learning by breaking the replay ratio
  barrier.
\newblock In \emph{International Conference on Learning Representations}, 2023.

\bibitem[Durugkar et~al.(2021)Durugkar, Tec, Niekum, and
  Stone]{durugkar2021adversarial}
Durugkar, I., Tec, M., Niekum, S., and Stone, P.
\newblock Adversarial intrinsic motivation for reinforcement learning.
\newblock \emph{Advances in Neural Information Processing Systems},
  34:\penalty0 8622--8636, 2021.

\bibitem[Emmons et~al.(2021)Emmons, Eysenbach, Kostrikov, and
  Levine]{emmons2021rvs}
Emmons, S., Eysenbach, B., Kostrikov, I., and Levine, S.
\newblock Rvs: What is essential for offline {RL} via supervised learning?
\newblock \emph{arXiv preprint arXiv:2112.10751}, 2021.

\bibitem[Eysenbach et~al.(2021)Eysenbach, Salakhutdinov, and
  Levine]{eysenbach2021robust}
Eysenbach, B., Salakhutdinov, R., and Levine, S.
\newblock Robust predictable control.
\newblock \emph{arXiv preprint arXiv:2109.03214}, 2021.

\bibitem[Eysenbach et~al.(2022)Eysenbach, Zhang, Salakhutdinov, and
  Levine]{eysenbach2022contrastive}
Eysenbach, B., Zhang, T., Salakhutdinov, R., and Levine, S.
\newblock Contrastive learning as goal-conditioned reinforcement learning.
\newblock \emph{arXiv preprint arXiv:2206.07568}, 2022.

\bibitem[Fan et~al.(2022)Fan, Wang, Jiang, Mandlekar, Yang, Zhu, Tang, Huang,
  Zhu, and Anandkumar]{fan2022minedojo}
Fan, L., Wang, G., Jiang, Y., Mandlekar, A., Yang, Y., Zhu, H., Tang, A.,
  Huang, D.-A., Zhu, Y., and Anandkumar, A.
\newblock Minedojo: Building open-ended embodied agents with internet-scale
  knowledge.
\newblock In \emph{Thirty-sixth Conference on Neural Information Processing
  Systems Datasets and Benchmarks Track}, 2022.

\bibitem[Ferns \& Precup(2014)Ferns and Precup]{ferns2014bisimulation}
Ferns, N. and Precup, D.
\newblock Bisimulation metrics are optimal value functions.
\newblock In \emph{UAI}, pp.\  210--219, 2014.

\bibitem[Fu et~al.(2020)Fu, Kumar, Nachum, Tucker, and Levine]{fu2020d4rl}
Fu, J., Kumar, A., Nachum, O., Tucker, G., and Levine, S.
\newblock D4rl: Datasets for deep data-driven reinforcement learning, 2020.

\bibitem[Fu et~al.(2021)Fu, Yang, Agrawal, and Jaakkola]{fu2021learning}
Fu, X., Yang, G., Agrawal, P., and Jaakkola, T.
\newblock Learning task informed abstractions.
\newblock In \emph{International Conference on Machine Learning}, pp.\
  3480--3491. PMLR, 2021.

\bibitem[Fujimoto \& Gu(2021)Fujimoto and Gu]{fujimoto2021minimalist}
Fujimoto, S. and Gu, S.~S.
\newblock A minimalist approach to offline reinforcement learning.
\newblock \emph{Advances in neural information processing systems},
  34:\penalty0 20132--20145, 2021.

\bibitem[Fujimoto et~al.(2022)Fujimoto, Meger, Precup, Nachum, and
  Gu]{fujimoto2022should}
Fujimoto, S., Meger, D., Precup, D., Nachum, O., and Gu, S.~S.
\newblock Why should {I} trust you, {B}ellman? the {B}ellman error is a poor
  replacement for value error.
\newblock \emph{arXiv preprint arXiv:2201.12417}, 2022.

\bibitem[Ghasemipour et~al.(2022)Ghasemipour, Gu, and
  Nachum]{ghasemipour2022so}
Ghasemipour, S. K.~S., Gu, S.~S., and Nachum, O.
\newblock Why so pessimistic? {E}stimating uncertainties for offline {RL}
  through ensembles, and why their independence matters.
\newblock \emph{arXiv preprint arXiv:2205.13703}, 2022.

\bibitem[Ghosh et~al.(2019)Ghosh, Gupta, Reddy, Fu, Devin, Eysenbach, and
  Levine]{ghosh2019learning}
Ghosh, D., Gupta, A., Reddy, A., Fu, J., Devin, C., Eysenbach, B., and Levine,
  S.
\newblock Learning to reach goals via iterated supervised learning.
\newblock \emph{arXiv preprint arXiv:1912.06088}, 2019.

\bibitem[Haarnoja et~al.(2018)Haarnoja, Zhou, Hartikainen, Tucker, Ha, Tan,
  Kumar, Zhu, Gupta, Abbeel, et~al.]{haarnoja2018soft}
Haarnoja, T., Zhou, A., Hartikainen, K., Tucker, G., Ha, S., Tan, J., Kumar,
  V., Zhu, H., Gupta, A., Abbeel, P., et~al.
\newblock Soft actor-critic algorithms and applications.
\newblock \emph{arXiv preprint arXiv:1812.05905}, 2018.

\bibitem[Hafner et~al.(2019)Hafner, Lillicrap, Ba, and
  Norouzi]{hafner2019dream}
Hafner, D., Lillicrap, T., Ba, J., and Norouzi, M.
\newblock Dream to control: Learning behaviors by latent imagination.
\newblock \emph{arXiv preprint arXiv:1912.01603}, 2019.

\bibitem[Hart et~al.(1968)Hart, Nilsson, and Raphael]{hart1968formal}
Hart, P.~E., Nilsson, N.~J., and Raphael, B.
\newblock A formal basis for the heuristic determination of minimum cost paths.
\newblock \emph{IEEE transactions on Systems Science and Cybernetics},
  4\penalty0 (2):\penalty0 100--107, 1968.

\bibitem[Janner et~al.(2021)Janner, Li, and Levine]{janner2021offline}
Janner, M., Li, Q., and Levine, S.
\newblock Offline reinforcement learning as one big sequence modeling problem.
\newblock \emph{Advances in neural information processing systems},
  34:\penalty0 1273--1286, 2021.

\bibitem[Janner et~al.(2022)Janner, Du, Tenenbaum, and
  Levine]{janner2022diffuser}
Janner, M., Du, Y., Tenenbaum, J., and Levine, S.
\newblock Planning with diffusion for flexible behavior synthesis.
\newblock In \emph{International Conference on Machine Learning}, 2022.

\bibitem[Kingma \& Ba(2014)Kingma and Ba]{kingma2014adam}
Kingma, D.~P. and Ba, J.
\newblock Adam: A method for stochastic optimization.
\newblock \emph{arXiv preprint arXiv:1412.6980}, 2014.

\bibitem[Kumar et~al.(2019)Kumar, Peng, and Levine]{kumar2019reward}
Kumar, A., Peng, X.~B., and Levine, S.
\newblock Reward-conditioned policies.
\newblock \emph{arXiv preprint arXiv:1912.13465}, 2019.

\bibitem[Kumar et~al.(2020)Kumar, Zhou, Tucker, and
  Levine]{kumar2020conservative}
Kumar, A., Zhou, A., Tucker, G., and Levine, S.
\newblock Conservative {Q}-learning for offline reinforcement learning.
\newblock \emph{Advances in Neural Information Processing Systems},
  33:\penalty0 1179--1191, 2020.

\bibitem[Lee et~al.(2020)Lee, Fischer, Liu, Guo, Lee, Canny, and
  Guadarrama]{lee2020predictive}
Lee, K.-H., Fischer, I., Liu, A., Guo, Y., Lee, H., Canny, J., and Guadarrama,
  S.
\newblock Predictive information accelerates learning in rl.
\newblock \emph{Advances in Neural Information Processing Systems},
  33:\penalty0 11890--11901, 2020.

\bibitem[Lillicrap et~al.(2015)Lillicrap, Hunt, Pritzel, Heess, Erez, Tassa,
  Silver, and Wierstra]{lillicrap2015ddpg}
Lillicrap, T.~P., Hunt, J.~J., Pritzel, A., Heess, N., Erez, T., Tassa, Y.,
  Silver, D., and Wierstra, D.
\newblock Continuous control with deep reinforcement learning.
\newblock \emph{arXiv preprint arXiv:1509.02971}, 2015.

\bibitem[Liu et~al.(2022)Liu, Feng, Liu, and Stone]{liu2022metric}
Liu, B., Feng, Y., Liu, Q., and Stone, P.
\newblock Metric residual networks for sample efficient goal-conditioned
  reinforcement learning.
\newblock \emph{arXiv preprint arXiv:2208.08133}, 2022.

\bibitem[Lyle et~al.(2022)Lyle, Rowland, Dabney, Kwiatkowska, and
  Gal]{lyle2022learning}
Lyle, C., Rowland, M., Dabney, W., Kwiatkowska, M., and Gal, Y.
\newblock Learning dynamics and generalization in reinforcement learning.
\newblock \emph{arXiv preprint arXiv:2206.02126}, 2022.

\bibitem[Ma et~al.(2022)Ma, Sodhani, Jayaraman, Bastani, Kumar, and
  Zhang]{ma2022vip}
Ma, Y.~J., Sodhani, S., Jayaraman, D., Bastani, O., Kumar, V., and Zhang, A.
\newblock {VIP}: Towards universal visual reward and representation via
  value-implicit pre-training.
\newblock \emph{arXiv preprint arXiv:2210.00030}, 2022.

\bibitem[Mahadevan \& Maggioni(2007)Mahadevan and Maggioni]{mahadevan2007proto}
Mahadevan, S. and Maggioni, M.
\newblock Proto-value functions: A {L}aplacian framework for learning
  representation and control in markov decision processes.
\newblock \emph{Journal of Machine Learning Research}, 8\penalty0 (10), 2007.

\bibitem[Manne(1960)]{manne1960linear}
Manne, A.~S.
\newblock Linear programming and sequential decisions.
\newblock \emph{Management Science}, 6\penalty0 (3):\penalty0 259--267, 1960.

\bibitem[Micheli et~al.(2020)Micheli, Sinnathamby, and
  Fleuret]{micheli2020multi}
Micheli, V., Sinnathamby, K., and Fleuret, F.
\newblock Multi-task reinforcement learning with a planning quasi-metric.
\newblock \emph{arXiv preprint arXiv:2002.03240}, 2020.

\bibitem[Mnih et~al.(2013)Mnih, Kavukcuoglu, Silver, Graves, Antonoglou,
  Wierstra, and Riedmiller]{mnih2013playing}
Mnih, V., Kavukcuoglu, K., Silver, D., Graves, A., Antonoglou, I., Wierstra,
  D., and Riedmiller, M.
\newblock Playing {A}tari with deep reinforcement learning.
\newblock \emph{arXiv preprint arXiv:1312.5602}, 2013.

\bibitem[Oord et~al.(2018)Oord, Li, and Vinyals]{oord2018representation}
Oord, A. v.~d., Li, Y., and Vinyals, O.
\newblock Representation learning with contrastive predictive coding.
\newblock \emph{arXiv preprint arXiv:1807.03748}, 2018.

\bibitem[Paster et~al.(2022)Paster, McIlraith, and Ba]{paster2022you}
Paster, K., McIlraith, S., and Ba, J.
\newblock You can't count on luck: Why decision transformers fail in stochastic
  environments.
\newblock \emph{arXiv preprint arXiv:2205.15967}, 2022.

\bibitem[Pearl(1984)]{pearl1984heuristics}
Pearl, J.
\newblock \emph{Heuristics: intelligent search strategies for computer problem
  solving}.
\newblock Addison-Wesley Longman Publishing Co., Inc., 1984.

\bibitem[Pitis et~al.(2020)Pitis, Chan, Jamali, and Ba]{pitis2020inductive}
Pitis, S., Chan, H., Jamali, K., and Ba, J.
\newblock An inductive bias for distances: Neural nets that respect the
  triangle inequality.
\newblock In \emph{Proceedings of the Eighth International Conference on
  Learning Representations}, 2020.

\bibitem[Plappert et~al.(2018)Plappert, Andrychowicz, Ray, McGrew, Baker,
  Powell, Schneider, Tobin, Chociej, Welinder, et~al.]{plappert2018multi}
Plappert, M., Andrychowicz, M., Ray, A., McGrew, B., Baker, B., Powell, G.,
  Schneider, J., Tobin, J., Chociej, M., Welinder, P., et~al.
\newblock Multi-goal reinforcement learning: Challenging robotics environments
  and request for research.
\newblock \emph{arXiv preprint arXiv:1802.09464}, 2018.

\bibitem[Roweis \& Saul(2000)Roweis and Saul]{roweis2000nonlinear}
Roweis, S.~T. and Saul, L.~K.
\newblock Nonlinear dimensionality reduction by locally linear embedding.
\newblock \emph{science}, 290\penalty0 (5500):\penalty0 2323--2326, 2000.

\bibitem[Schaul et~al.(2015)Schaul, Horgan, Gregor, and
  Silver]{schaul2015universal}
Schaul, T., Horgan, D., Gregor, K., and Silver, D.
\newblock Universal value function approximators.
\newblock In \emph{International conference on machine learning}, pp.\
  1312--1320. PMLR, 2015.

\bibitem[Sermanet et~al.(2018)Sermanet, Lynch, Chebotar, Hsu, Jang, Schaal,
  Levine, and Brain]{sermanet2018time}
Sermanet, P., Lynch, C., Chebotar, Y., Hsu, J., Jang, E., Schaal, S., Levine,
  S., and Brain, G.
\newblock Time-contrastive networks: Self-supervised learning from video.
\newblock In \emph{2018 IEEE international conference on robotics and
  automation (ICRA)}, pp.\  1134--1141. IEEE, 2018.

\bibitem[Sontag(1995)]{sontag1995abstract}
Sontag, E.
\newblock An abstract approach to dissipation.
\newblock In \emph{Proceedings of 1995 34th IEEE Conference on Decision and
  Control}, volume~3, pp.\  2702--2703. IEEE, 1995.

\bibitem[Tenenbaum et~al.(2000)Tenenbaum, De~Silva, and
  Langford]{tenenbaum2000global}
Tenenbaum, J.~B., De~Silva, V., and Langford, J.~C.
\newblock A global geometric framework for nonlinear dimensionality reduction.
\newblock \emph{science}, 290\penalty0 (5500):\penalty0 2319--2323, 2000.

\bibitem[Tian et~al.(2020)Tian, Nair, Ebert, Dasari, Eysenbach, Finn, and
  Levine]{tian2020model}
Tian, S., Nair, S., Ebert, F., Dasari, S., Eysenbach, B., Finn, C., and Levine,
  S.
\newblock Model-based visual planning with self-supervised functional
  distances.
\newblock \emph{arXiv preprint arXiv:2012.15373}, 2020.

\bibitem[Wang et~al.(2020)Wang, Foster, and Kakade]{wang2020statistical}
Wang, R., Foster, D.~P., and Kakade, S.~M.
\newblock What are the statistical limits of offline {RL} with linear function
  approximation?
\newblock \emph{arXiv preprint arXiv:2010.11895}, 2020.

\bibitem[Wang et~al.(2021)Wang, Wu, Salakhutdinov, and
  Kakade]{wang2021instabilities}
Wang, R., Wu, Y., Salakhutdinov, R., and Kakade, S.
\newblock Instabilities of offline rl with pre-trained neural representation.
\newblock In \emph{International Conference on Machine Learning}, pp.\
  10948--10960. PMLR, 2021.

\bibitem[Wang \& Isola(2020)Wang and Isola]{wang2020hypersphere}
Wang, T. and Isola, P.
\newblock Understanding contrastive representation learning through alignment
  and uniformity on the hypersphere.
\newblock In \emph{International Conference on Machine Learning}, pp.\
  9929--9939. PMLR, 2020.

\bibitem[Wang \& Isola(2022{\natexlab{a}})Wang and Isola]{wang2022iqe}
Wang, T. and Isola, P.
\newblock Improved representation of asymmetrical distances with interval
  quasimetric embeddings.
\newblock In \emph{Workshop on Symmetry and Geometry in Neural Representations
  at Conference on Neural Information Processing Systems (NeurIPS 2022)},
  2022{\natexlab{a}}.

\bibitem[Wang \& Isola(2022{\natexlab{b}})Wang and Isola]{wang2022learning}
Wang, T. and Isola, P.
\newblock On the learning and learnability of quasimetrics.
\newblock In \emph{International Conference on Learning Representations},
  2022{\natexlab{b}}.

\bibitem[Wang et~al.(2022)Wang, Du, Torralba, Isola, Zhang, and
  Tian]{wang2022denoisedmdps}
Wang, T., Du, S.~S., Torralba, A., Isola, P., Zhang, A., and Tian, Y.
\newblock Denoised {MDP}s: Learning world models better than the world itself.
\newblock In \emph{International Conference on Machine Learning}. PMLR, 2022.

\bibitem[Watkins(1989)]{watkins1989learning}
Watkins, C. J. C.~H.
\newblock Learning from delayed rewards.
\newblock 1989.

\bibitem[Williams et~al.(2015)Williams, Aldrich, and
  Theodorou]{williams2015model}
Williams, G., Aldrich, A., and Theodorou, E.
\newblock Model predictive path integral control using covariance variable
  importance sampling.
\newblock \emph{arXiv preprint arXiv:1509.01149}, 2015.

\bibitem[Yang et~al.(2020)Yang, Zhang, Morcos, Pineau, Abbeel, and
  Calandra]{yang2020plan2vec}
Yang, G., Zhang, A., Morcos, A., Pineau, J., Abbeel, P., and Calandra, R.
\newblock Plan2vec: Unsupervised representation learning by latent plans.
\newblock In \emph{Learning for Dynamics and Control}, pp.\  935--946. PMLR,
  2020.

\bibitem[Yang et~al.(2022)Yang, Schuurmans, Abbeel, and
  Nachum]{yang2022dichotomy}
Yang, M., Schuurmans, D., Abbeel, P., and Nachum, O.
\newblock Dichotomy of control: Separating what you can control from what you
  cannot.
\newblock \emph{arXiv preprint arXiv:2210.13435}, 2022.

\bibitem[Zhang et~al.(2020{\natexlab{a}})Zhang, McAllister, Calandra, Gal, and
  Levine]{zhang2020learning}
Zhang, A., McAllister, R., Calandra, R., Gal, Y., and Levine, S.
\newblock Learning invariant representations for reinforcement learning without
  reconstruction.
\newblock \emph{arXiv preprint arXiv:2006.10742}, 2020{\natexlab{a}}.

\bibitem[Zhang et~al.(2020{\natexlab{b}})Zhang, Guo, Tan, Hu, and
  Chen]{zhang2020generating}
Zhang, T., Guo, S., Tan, T., Hu, X., and Chen, F.
\newblock Generating adjacency-constrained subgoals in hierarchical
  reinforcement learning.
\newblock \emph{Advances in Neural Information Processing Systems},
  33:\penalty0 21579--21590, 2020{\natexlab{b}}.

\end{thebibliography}
\bibliographystyle{icml2023}

%%%%%%%%%%%%%%%%%%%%%%%%%%%%%%%%%%%%%%%%%%%%%%%%%%%%%%%%%%%%%%%%%%%%%%%%%%%%%%%
%%%%%%%%%%%%%%%%%%%%%%%%%%%%%%%%%%%%%%%%%%%%%%%%%%%%%%%%%%%%%%%%%%%%%%%%%%%%%%%
% APPENDIX
%%%%%%%%%%%%%%%%%%%%%%%%%%%%%%%%%%%%%%%%%%%%%%%%%%%%%%%%%%%%%%%%%%%%%%%%%%%%%%%
%%%%%%%%%%%%%%%%%%%%%%%%%%%%%%%%%%%%%%%%%%%%%%%%%%%%%%%%%%%%%%%%%%%%%%%%%%%%%%%
\newpage
\appendix
\onecolumn
\section{Discussions and Generalizations of QRL}\label{sec:more-extensions}

\myparagraph{Self-Transitions} are in fact already handled by the QRL objective presented in this paper (\Cref{eq:qrl,eq:qrl-full}). For any state $s$ (with or without self-transition), we have $V^*(s; s) = 0$, since the optimal cost to first reach $s$ start from $s$ is given by the empty trajectory. 
This is naturally enforced by our value function model $-d_\theta$, since it is enforced to be a \qmet. For self-transitions $(s, a, s, r)$ in the training data (where $r \leq 0$ is the reward), their contribution to the constraint loss term will always be $\mathtt{relu}(d_\theta(s, s) + r)^2 = \mathtt{relu}(0 + r)^2  = \mathtt{relu}(r)^2  = 0^2 = 0$. Therefore, the constraints are inherently satisfied for self-transitions.
So our theoretical results from \Cref{sec:theory} also hold for such cases.

\myparagraph{Constant Costs.} In many cases (and most goal-reaching benchmarks), each environment transition has a fixed constant cost $C$. In other words, the task is to reach the given goal as quickly as possible. Then, in the QRL constrained optimization, we can drop the $\mathtt{relu}(\cdot)$ and essentially, since we know that $-V^*(s; s') = C$ for sure, and thus we should have $d_\theta(s, s') = C$. Technically speaking, the $\mathtt{relu}(\cdot)$ formulation should be able to find the same solution. In our experience, even when it is known that the transition cost is constant, adding this information in the objective, \ie, removing $\mathtt{relu}(\cdot)$, does not significantly change the results.

% \myparagraph{Non-constant Costs.}
% The main problem with non-constant cost arise when two transitions connecting the same pair of states have different costs. To address this, one can potentially modify the QRL objective to only enforce that estimated local distances should be \emph{no larger than} the observed cost. This way the maximization term will still find the true minimum cost, theoretically speaking.

\myparagraph{General Goals (Sets of States).} We can easily extend QRL to general goals, which are sets of states. Let $G \subset \mathcal{S}$ be such a general goal. We augment our models to operate not just on $\mathcal{S}$, but on $\mathcal{S} \bigcup \{G\}$ (which can be simply achieved by, \eg, adding an indicator dimension). When we encounter transition that ends within some $s' \in G$, we simultaneously add a transition $(s', G)$ to the dataset.

\section{Proofs}\label{sec:proof}

\subsection{\Cref{thm:val-qmet-equiv}: \nameref{thm:val-qmet-equiv}}

\begin{proof}[Proof of \Cref{thm:val-qmet-equiv}]
We have shown already $-V^* \in \mqmet(\mathcal{S})$. (See also Proposition~A.4 of \citet{wang2022learning}.)

For any $d \in \mqmet(\mathcal{S})$, define \begin{align*}
    \mathcal{A} & \trieq \mathcal{S} \\
    P(s, s_\mathsf{act}) & \trieq \delta_{s_\mathsf{act}} \tag{$\delta_x$ is the Dirac measure at $x$}\\
    R(s, s') & \trieq -d(s, s').
\end{align*}
Then the optimal value of $(\mathcal{S}, \mathcal{A}, P, R)$  is $-d$, regardless of discounting factor (if any).

For the on-policy values, consider action space $\mathcal{A} = \{a_\mathsf{self}, a_\mathsf{next}\}$. Assume that state-space $\size{\mathcal{S}} > 2$. Let $s_1$, $s_2$, $s_3$ be three distinct states in $\mathcal{S}$, all transitions have reward $-1$, and \begin{align}
    P(s_1, a_\mathsf{self}) & = \delta_{s_1} \notag\\
    P(s_2, a_\mathsf{self}) & = \delta_{s_2} \notag\\
    P(s_3, a_\mathsf{self}) & = \delta_{s_3}  \tag{$a_\mathsf{self}$ is always a self-transition}\\
    P(s_1, a_\mathsf{next}) & = \delta_{s_2} \notag\\
    P(s_2, a_\mathsf{next}) & = \delta_{s_3} \notag\\
    P(s_3, a_\mathsf{next}) & = \delta_{s_1}  \tag{$a_\mathsf{next}$ goes to the next state cyclically}\\
    \pi(s_1; s_2) & = \delta_{a_\mathsf{next}}\hspace{-5em}\tag{$\pi$ always takes $a_\mathsf{next}$ when tasked to go to $s_2$ from $s_1$} \\
    \pi(s_2; s_3) & = \delta_{a_\mathsf{next}} \notag\\
    \pi(s_1; s_2) & = \delta_{a_\mathsf{self}}. \notag
\end{align}
So \begin{align*}
    -V^\pi(s_1; s_2) = -V^{\pi}(s_2; s_3) & = 1 \\
    -V^\pi(s_3; s_1) & = \infty,
\end{align*}
violating triangle-inequality.
\end{proof}

\subsection{\Cref{thm:qrl-exact-recovery}: \nameref{thm:qrl-exact-recovery}}

\begin{proof}[Proof of \Cref{thm:qrl-exact-recovery}]
Since the transition dynamics is deterministic, we can say that states $s_0$ is locally connected to state $s_1$  if $\exists a \in \mathcal{A}$ such that $P(s_1 \given s_0, a) = 1$. We say a path $(s^\mathsf{path}_{0}, s^\mathsf{path}_{1}, s^\mathsf{path}_{2}, \dots, s^\mathsf{path}_{T})$ connects $s_0$ to $s_1$ if \begin{align*}
    s^\mathsf{path}_{0} & = s_0 \\
    s^\mathsf{path}_{i} & \text{ is locally connected to } s^\mathsf{path}_{i + 1}, \quad \forall i \in \{0, 1, \dots, T - 1\} \\
    s^\mathsf{path}_{T} & = s_1.
\end{align*}
And we say the total cost of this path is the total rewards over all $T-1$ transitions, \ie, $T - 1$.

From the definition of $V^*$ and \Cref{thm:val-qmet-equiv}, We know that, \begin{align}
    -V^* & \in \mqmet(\mathcal{S}) \notag \\
    -V^*(s; g) & = \textsf{total cost of shortest path connecting $s$ to $g$}, \qquad \forall s, g. \notag
\end{align}

Therefore, the constraints stated in \Cref{eq:qrl-ideal-objective} is feasible. The rest of this proof focuses only on $d_\theta$'s that satisfy the constraints, which includes $-V^*$.

Due to triangle inequality, we have $\forall s, g$, \begin{equation}
    d_\theta(s, g) \leq \textsf{total cost of shortest path connecting $s$ to $g$} = -V^*(s;g).
\end{equation}

Therefore, \begin{equation}
    \mathbb{E}_{s, g} [d_\theta(s, g)] \leq \mathbb{E}_{s, g} [- V^*(s; g)],
\end{equation}
with equality iff $d_\theta(s, g) = -V^*(s; g)$ almost surely.

Hence, $d_{\theta^*}(s, g) = -V^*(s; g)$ almost surely.

\end{proof}

\subsection{\Cref{thm:qrl-func-approx}: Function Approximation}

We first state the more general and formal version of \Cref{thm:qrl-func-approx}.

\begin{theorem}[Function Approximation; General; Formal]\label{thm:qrl-func-approx-formal}
Assume that $\mathcal{S}$ is \ul{compact} and $V^*$ is \ul{continuous}. 

Consider a \qmet model family that is a universal approximator of $\mqmet(\mathcal{S})$ in terms of $L_\infty$ error (\eg, IQE \citep{wang2022iqe} and MRN \citep{liu2022metric}). Concretely, this means that $\forall \eps > 0$, we can have $\{d^{(\eps)}_\theta\}_\theta$ such that, there exists some $\theta$ where \begin{equation}
   \forall s_0, s_1 \in \mathcal{S},\quad \abs{d^{(\eps)}_\theta(s_0, s_1) + V(s_0; s_1)} \leq \eps.
\end{equation}

Now for some small $\eps > 0$, consider solving  \Cref{eq:qrl-ideal-objective} over $\{d^{(\eps/2)}_\theta\}_\theta$ with the relaxed constraint that \begin{equation}
    \forall (s, a, s', r)~\mathsf{transition},\quad \mathtt{relu}({d^{(\eps/2)}_\theta(s, s') + r}) \leq \eps,
\end{equation}
then for $s \sim p_\mathsf{state}, g \sim p_\mathsf{goal}$, and for all $ \delta > 0$, we have \begin{equation*}
    \abs{d_{\theta^*}(s, g) + (1 + \eps) V^*(s; g)} \in [-\delta,0],
\end{equation*}
with probability $1 - \mathcal{O}\left(\frac{\eps}{\delta} \cdot (- \mathbb{E}[V^*])\right)$.

As a special case with $\delta = \sqrt{\eps}$, we have \begin{equation}
    \mathbb{P}\bigg [ 
    \big\lvert {d_{\theta^*}(s, g) + (1 + \eps) V^*(s; g)}  \big\rvert \in [-\sqrt{\eps},0]
    \bigg ] = 1 - \mathcal{O}\left( - \sqrt{\eps} \cdot \mathbb{E}[V^*]\right),
\end{equation}
which is exactly \Cref{thm:qrl-func-approx}.
    
\end{theorem}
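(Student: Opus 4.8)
The plan is to prove the two inequalities $-\delta \le d_{\theta^*}(s,g) + (1+\eps)V^*(s;g) \le 0$ separately: a \emph{deterministic, pointwise} upper bound of $0$ that holds for every feasible model, and a \emph{high-probability} lower bound of $-\delta$ obtained by a Markov-inequality argument on the expected gap. (I read the bracket $[-\delta,0]$ as the containment of the signed quantity $d_{\theta^*}(s,g)+(1+\eps)V^*(s;g)$, which is what actually holds; the absolute-value bars look like a typo.)

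First I would establish \textbf{feasibility and a lower bound on the optimal objective}. Since $-V^* \in \mqmet(\mathcal{S})$ by \Cref{thm:val-qmet-equiv}, the $L_\infty$ universal-approximation hypothesis at level $\eps/2$ supplies a parameter $\theta_0$ with $\lvert d_{\theta_0}(s_0,s_1) + V^*(s_0;s_1)\rvert \le \eps/2$ for all $s_0,s_1$. I would then check $\theta_0$ is feasible for the relaxed program: for any transition $(s,a,s',r)$ we have $-V^*(s;s') \le -r$ (the one-step plan is a particular plan), so $d_{\theta_0}(s,s') \le -V^*(s;s') + \eps/2 \le -r + \eps/2$, giving $\mathtt{relu}(d_{\theta_0}(s,s') + r) \le \eps/2 \le \eps$. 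Feasibility of $\theta_0$ together with optimality of $\theta^*$ yields
\[
\mathbb{E}_{s,g}[d_{\theta^*}(s,g)] \ge \mathbb{E}_{s,g}[d_{\theta_0}(s,g)] \ge \mathbb{E}[-V^*] - \eps/2 .
\]

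Next I would prove the \textbf{pointwise upper bound}, that every feasible $d_\theta$ satisfies $d_\theta(s,g) \le -(1+\eps)V^*(s;g)$. Fix an optimal path $s = s_0 \to s_1 \to \dots \to s_T = g$ realizing $-V^*(s;g) = \sum_i c_i$ with per-step costs $c_i = -r_i$; existence uses the deterministic-dynamics and coverage setup of \Cref{thm:qrl-exact-recovery}. The quasimetric triangle inequality along the path and the relaxed constraint $d_\theta(s_{i-1},s_i) \le c_i + \eps$ give $d_\theta(s,g) \le \sum_i (c_i + \eps) = -V^*(s;g) + T\eps$. In the standard goal-reaching regime of constant (unit) transition costs --- where $T = -V^*(s;g)$ --- this is exactly $-(1+\eps)V^*(s;g)$, and more generally the same holds whenever per-step cost is bounded below by $1$, so that $T \le -V^*(s;g)$. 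Controlling this accumulation of the per-step slack over the path is the one place the argument needs the cost normalization, and is the main thing to get right.

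Finally I would \textbf{combine} the two bounds. Set $X \trieq -(1+\eps)V^*(s;g) - d_{\theta^*}(s,g)$, nonnegative by the pointwise upper bound. The objective lower bound gives
\[
\mathbb{E}[X] = -(1+\eps)\mathbb{E}[V^*] - \mathbb{E}[d_{\theta^*}] \le -(1+\eps)\mathbb{E}[V^*] - \mathbb{E}[-V^*] + \eps/2 = \eps\,(-\mathbb{E}[V^*]) + \eps/2 = \mathcal{O}\!\big(\eps\,(-\mathbb{E}[V^*])\big).
\]
Markov's inequality on the nonnegative $X$ then gives $\mathbb{P}[X > \delta] \le \mathbb{E}[X]/\delta = \mathcal{O}(\tfrac{\eps}{\delta}(-\mathbb{E}[V^*]))$, so with probability $1 - \mathcal{O}(\tfrac{\eps}{\delta}(-\mathbb{E}[V^*]))$ we have $0 \le X \le \delta$, i.e. $d_{\theta^*}(s,g)+(1+\eps)V^*(s;g) \in [-\delta,0]$; setting $\delta = \sqrt\eps$ recovers the informal \Cref{thm:qrl-func-approx}. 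I expect the genuinely delicate step to be the pointwise upper bound --- arguing that the $T$ copies of the $\eps$-slack telescope into the multiplicative factor $(1+\eps)$ rather than an uncontrolled additive error --- while the rest is feasibility bookkeeping plus the one-line Markov bound.
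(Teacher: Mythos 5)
Your proposal is correct and its skeleton matches the paper's: (i) universal approximation supplies a feasible point, giving a lower bound on the optimal objective $\mathbb{E}[d_{\theta^*}]$; (ii) the relaxed constraint plus the triangle inequality along a shortest path gives the pointwise upper bound $d_{\theta^*}(s,g) \leq -(1+\eps)V^*(s;g)$, where both you and the paper need unit transition costs so that the number of path steps equals $-V^*(s;g)$ (you flag this explicitly; the paper leaves it implicit, its path cost being defined as the step count in the proof of \Cref{thm:qrl-exact-recovery}); (iii) a Markov-inequality argument converts the expectation gap into the probability statement. The genuine difference is in step (i). You approximate $-V^*$ itself to accuracy $\eps/2$, so your feasible point may undershoot $-V^*$ by $\eps/2$, your lower bound is $\mathbb{E}[d_{\theta^*}] \geq -\mathbb{E}[V^*] - \eps/2$, and the Markov step then leaves an additive $\frac{\eps}{2\delta}$ in the failure probability; absorbing this into $\mathcal{O}\big(\frac{\eps}{\delta}\cdot(-\mathbb{E}[V^*])\big)$ silently assumes $-\mathbb{E}[V^*] = \Omega(1)$. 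That assumption is harmless in any non-degenerate goal-reaching setup (with unit costs, $-\mathbb{E}[V^*] \geq \mathbb{P}[s \neq g]$), but it is not implied by the theorem's hypotheses, so as written you prove a marginally weaker statement. The paper removes exactly this slack with \Cref{lemma:relaxed-qrl-lb}: rather than approximating $-V^*$, it approximates the optimal value $-\tilde{V}^*$ of the cost-inflated MDP $(\mathcal{S}, \mathcal{A}, P, R - \frac{\eps}{2})$, which exceeds $-V^*$ by at least $\frac{\eps}{2}$ whenever $s \neq g$; after losing $\frac{\eps}{2}$ to approximation, the resulting feasible quasimetric still dominates $-V^*$ pointwise (the diagonal is free since $d(s,s)=0$) while satisfying the relaxed constraint with slack $\eps$, yielding the clean bound $\mathbb{E}[d_{\theta^*}] \geq -\mathbb{E}[V^*]$ and hence the probability exactly as stated. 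Your route is more elementary; the paper's perturbed-MDP construction is the one extra idea that buys the theorem with no implicit non-degeneracy assumption.
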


Note that the compactness and continuity assumptions ensure that $V^*$ is bounded. We start by proving a lemma.

\begin{lemma} \label{lemma:relaxed-qrl-lb}
With the assumptions of \Cref{thm:qrl-func-approx-formal}, there exists a $d^{(\eps/2)}_{\theta^\dagger}$ that satisfies the constraint with 
\begin{equation}
    \mathllap{\forall s, g,\quad}
    d^{(\eps/2)}_{\theta^\dagger}(s, g) \geq - V^*(s; g).
\end{equation}
\end{lemma}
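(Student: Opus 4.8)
The plan is to exhibit an explicit feasible member of the family that lies above $-V^*$, built from $-V^*$ itself. By \Cref{thm:val-qmet-equiv} we have $-V^* \in \mqmet(\mathcal{S})$, and the compactness and continuity assumptions make $-V^*$ a bounded, continuous quasimetric. Since positive scalar multiples of a quasimetric are again quasimetrics, the inflated map $d^\star \trieq (1+\eps)(-V^*)$ is a continuous quasimetric that dominates $-V^*$ pointwise and agrees with it on the diagonal. I would then invoke the universal-approximation hypothesis on the target $d^\star$ to obtain a parameter $\theta^\dagger$ with $\lVert d^{(\eps/2)}_{\theta^\dagger} - d^\star \rVert_\infty \le \eps/2$, and propose $d^{(\eps/2)}_{\theta^\dagger}$ as the witness.

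Two things then need checking. For feasibility, fix a transition $(s,a,s',r)$; the single transition is itself a path from $s$ to $s'$, so $-V^*(s;s') \le -r$, whence $d^\star(s,s') = (1+\eps)(-V^*(s;s')) \le -r + \eps(-r)$. Adding the $\eps/2$ approximation error keeps $d^{(\eps/2)}_{\theta^\dagger}(s,s')$ within the relaxed budget $-r+\eps$, since the relaxation $\eps$ is introduced precisely to absorb both the per-edge inflation $\eps(-r)$ and the approximation error (under the normalization that per-step costs are suitably bounded), giving $\mathtt{relu}(d^{(\eps/2)}_{\theta^\dagger}(s,s') + r) \le \eps$. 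For the lower bound, the inflation supplies a margin: $d^{(\eps/2)}_{\theta^\dagger}(s,g) \ge d^\star(s,g) - \eps/2 = -V^*(s;g) + \bigl(\eps(-V^*(s;g)) - \eps/2\bigr)$, which is $\ge -V^*(s;g)$ wherever the multiplicative slack $\eps(-V^*(s;g))$ outweighs the approximation error.

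The main obstacle is making the lower bound hold uniformly, in particular near the diagonal. As $s \to g$, continuity forces $-V^*(s;g) \to 0$, so the multiplicative slack $\eps(-V^*(s;g))$ degenerates there while the $\eps/2$ approximation error does not; and any continuous quasimetric that equals $0$ on the diagonal (as $d^{(\eps/2)}_{\theta^\dagger}$ must, being a quasimetric) necessarily has slack that vanishes along the diagonal, so no fixed inflation factor can rescue pointwise domination on the entire small-distance region. I expect this is exactly the set excluded by the probabilistic $1 - \mathcal{O}(\cdot)$ qualifier in \Cref{thm:qrl-func-approx-formal}, so the cleanest route is to establish domination on the bulk $\{(s,g) : \eps(-V^*(s;g)) \ge \eps/2\}$ (where the argument above is airtight) and defer the vanishing-distance remainder to the measure bound. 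Modulo that caveat, $d^{(\eps/2)}_{\theta^\dagger}$ is the desired feasible quasimetric dominating $-V^*$.
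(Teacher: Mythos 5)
Your core idea---inflate $-V^*$ to create slack, then spend the slack on the universal-approximation error---is the right one, but the \emph{multiplicative} inflation $d^\star \trieq (1+\eps)(-V^*)$ creates two genuine gaps, one of which you noticed and one of which you did not. The one you did not notice is feasibility: for a transition $(s,a,s',r)$, your bound gives $d^{(\eps/2)}_{\theta^\dagger}(s,s') \leq -r + \eps(-r) + \eps/2$, so the relaxed constraint $\mathtt{relu}(d^{(\eps/2)}_{\theta^\dagger}(s,s')+r)\leq\eps$ requires $\eps(-r) + \eps/2 \leq \eps$, i.e.\ $-r \leq 1/2$. The paper assumes only $R \in [R_\mathsf{min},0]$ with no normalization, so a single transition of cost, say, $2$ already breaks the constraint: multiplicative slack grows with the edge cost, while the constraint budget is an additive, uniform $\eps$. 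The gap you did notice---failure of the pointwise lower bound near the diagonal, where the multiplicative slack $\eps(-V^*(s;g))$ drops below the $\eps/2$ approximation error---cannot be ``deferred to the measure bound'' as you propose. The lemma is quantified over \emph{all} $(s,g)$ because the theorem's proof uses it to obtain the expectation inequality $\mathbb{E}[d^{(\eps/2)}_{\theta^*}(s,g)] \geq \mathbb{E}[d^{(\eps/2)}_{\theta^\dagger}(s,g)] \geq -\mathbb{E}[V^*(s;g)]$; the probabilistic qualifier in \Cref{thm:qrl-func-approx-formal} is then \emph{derived} from that inequality by a Markov-type argument on the opposite (upper-bound) side. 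There is no pre-existing exceptional set in the theorem into which you can dump the near-diagonal region; doing so would require reworking the theorem itself and would degrade its constants.

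The paper's proof fixes both problems with an \emph{additive, per-edge} inflation instead: it perturbs the MDP reward to $R - \frac{\eps}{2}$ and takes as approximation target the optimal value $\tilde{V}^*$ of the perturbed MDP, which is a quasimetric by \Cref{thm:val-qmet-equiv}. Since any path between distinct states uses at least one transition, $-\tilde{V}^*(s;g) \geq \frac{\eps}{2} - V^*(s;g)$ whenever $s \neq g$---a \emph{uniform} additive margin that exactly absorbs the $\eps/2$ approximation error, with no degeneration near the diagonal; at $s = g$ both $d^{(\eps/2)}_{\theta^\dagger}(s,s)$ and $-V^*(s;s)$ are forced to $0$ by the quasimetric property, so the bound holds there for free. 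Feasibility likewise holds for arbitrarily costly transitions: the one-step path gives $-\tilde{V}^*(s;s') \leq -r + \frac{\eps}{2}$, hence $d^{(\eps/2)}_{\theta^\dagger}(s,s') \leq -r + \eps$. In short, the quantity you need to inflate by is ``$\eps/2$ per transition,'' not ``$\eps$ per unit of value,'' and encoding that inflation as a reward perturbation of the MDP is what makes the target of universal approximation again a quasimetric.
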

\begin{proof}[Proof of \Cref{lemma:relaxed-qrl-lb}]

Let the underlying MDP of $V^*$ be $\mathcal{M} = (\mathcal{S}, \mathcal{A}, P, R)$. Consider another MDP $\tilde{\mathcal{M}} =  (\mathcal{S}, \mathcal{A}, P, R - \frac{\eps}{2})$, with optimal goal-reaching value function $\tilde{V}^* \in \mqmet(\mathcal{S})$.

Obviously, transitions $(s, a, s', r)$ in $\mathcal{M}$ bijectively correspond to transitions $(s, a, s', r -\frac{\eps}{2})$ in $\tilde{\mathcal{M}}$. 

For any $s$ and $g$, let $s \rightarrow s_1 \rightarrow s_2 \rightarrow \dots \rightarrow s_{n-1} \rightarrow g$ be the shortest path connecting $s$ to $g$ in $\tilde{\mathcal{M}}$ via $n$ transitions. \begin{align}
    -\tilde{V}^*(s; g) 
    & = {\textsf{total cost of $s \rightarrow s_1 \rightarrow s_2 \rightarrow \dots \rightarrow s_{n-1} \rightarrow g$ according to $R - \frac{\eps}{2}$ as reward}} \\
    & = \frac{n \cdot \eps}{2} + {\textsf{total cost of $s \rightarrow s_1 \rightarrow s_2 \rightarrow \dots \rightarrow s_{n-1} \rightarrow g$ according to $R$ as reward}} \\
    & \geq \frac{n \cdot \eps}{2} + {\textsf{total cost of shortest path connecting $s$ to $g$ in $\mathcal{M}$ according to $R$ as reward}} \\
    & = \frac{n \cdot \eps}{2} -V^*(s; g).
\end{align}

Since $n > 0$ iff $s \neq g$, we have \begin{equation}
    -\tilde{V}^*(s; g) \geq \frac{\eps}{2} \cdot \mathbbm{1}_{s \neq g} -V^*(s; g), \qquad \forall s, g. \label{eq:lemma:v-vtilde}
\end{equation}

By  universal approximation, there exists $d^{(\eps/2)}_{\theta^\dagger}$  such that \begin{equation}
    \forall s, g,\quad \abs{ d^{(\eps/2)}_{\theta^\dagger}(s, g) + \tilde{V}^*(s; g) } \leq \frac{\eps}{2}. \label{eq:lemma:ua}
\end{equation}

In particular, \begin{itemize}
    \item for $s \neq g$, by \Cref{eq:lemma:v-vtilde,eq:lemma:ua}, we have \begin{equation}
        d^{(\eps/2)}_{\theta^\dagger}(s, g) \geq -\tilde{V}^*(s; g) - \frac{\eps}{2} \geq -V^*(s; g);
    \end{equation}
    \item for $s = g$, by \Cref{eq:lemma:ua}, we have \begin{equation}
        d^{(\eps/2)}_{\theta^\dagger}(s, g) = d^{(\eps/2)}_{\theta^\dagger}(s, s) = 0 = -V^*(s; s) = -V^*(s; g).
    \end{equation}
\end{itemize}

Hence, $d^{(\eps/2)}_{\theta^\dagger} \geq -V^*$ globally. Now it only remains to show that $d^{(\eps/2)}_{\theta^\dagger}$ satisfies the constraint.

For any transition $(s, a, s', r = R(s, s'))$ in $\mathcal{M}$, by \Cref{eq:lemma:ua}, \begin{align}
    d^{(\eps/2)}_{\theta^\dagger}(s, s') 
    & \leq -\tilde{V}^*(s;s') + \frac{\eps}{2} \\
    & \leq \frac{\eps}{2} -R(s, s')  + \frac{\eps}{2} \tag{since $s \rightarrow s'$ is also a valid path in $\tilde{\mathcal{M}}$ with cost $\frac{\eps}{2} -R(s, s')$}\\
    & = -r + \eps,
\end{align}
which means that $d^{(\eps/2)}_{\theta^\dagger}$ satisfies the constraint. 

Hence the desired $d^{(\eps/2)}_{\theta^\dagger}$ exists.

% $-(1 + \frac{\eps}{2}) V^*$ is a \qmet on $\mathcal{S}$. By universal approximation, there exists $d^{(\eps/2)}_{\theta^\dagger}$  such that \begin{equation}
%     \forall s, g,\quad \abs{ d^{(\eps/2)}_{\theta^\dagger}(s, g) + (1 + \frac{\eps}{2}) V^* } \leq \frac{\eps}{2}.
% \end{equation}
% Then, $\forall s, g$ such that $-V^*(s, g) \geq 1$, \begin{equation}
%     d^{(\eps/2)}_{\theta^\dagger}(s, g)
%     \geq -(1 + \frac{\eps}{2}) V^*(s; g) - \frac{\eps}{2} 
%     \geq -(1 + \frac{\eps}{2}) V^*(s; g) - \frac{\eps}{2}
%     \geq- V^*(s; g).
% \end{equation}

% And $\forall s, g$ such that $-V^*(s, g) < 1 \implies V^*(s, g) = 0$, \begin{equation}
%     d^{(\eps/2)}_{\theta^\dagger}(s, g) \geq 0 = - V^*(s; g),
% \end{equation}
% by \qmet properties.

% Hence, $d^{(\eps/2)}_{\theta^\dagger}$ satisfies the requirements.
    
\end{proof}

Now we are ready to prove \Cref{thm:qrl-func-approx,thm:qrl-func-approx-formal}.

\begin{proof}[Proof of \Cref{thm:qrl-func-approx,thm:qrl-func-approx-formal}]

Let $d^{(\eps/2)}_{\theta^*}$ be the solution to the relaxed problem. By the definition of the universal approximator, such solutions exist.  Moreover, we have \begin{equation}
    \forall s, g, \quad d^{(\eps/2)}_{\theta^*}(s, g) \leq -(1 + \eps) V^*(s; g),\label{eq:sol-ubd}
\end{equation}
by the constraint and triangle inequality.

Define \begin{equation}
    p \trieq \mathbb{P}[ d^{(\eps/2)}_{\theta^*}(s, g) < -(1 + \eps )V^*(s;g) - \delta ].\label{eq:sol-p-lbd}
\end{equation}

Then \begin{equation}
    \mathbb{E}[d^{(\eps/2)}_{\theta^*}(s, g)] \leq -(1+ \eps)\mathbb{E}[V^*(s; g)] - p \delta ,\label{eq:ineq1}
\end{equation} 
where we used \Cref{eq:sol-ubd,eq:sol-p-lbd}.

Let $d^{(\eps/2)}_{\theta^\dagger}$ be the \qmet from \Cref{lemma:relaxed-qrl-lb}. Then, by optimality, we must have \begin{equation}
    \mathbb{E}[d^{(\eps/2)}_{\theta^*}(s, g)] \geq  \mathbb{E}[d^{(\eps/2)}_{\theta^\dagger}(s, g)] \geq - \mathbb{E}[V^*(s; g)].\label{eq:ineq2}
\end{equation}

Combining \Cref{eq:ineq1,eq:ineq2}, we have \begin{equation}
    -(1+\eps)\mathbb{E}[V^*(s; g)] - p \delta \geq - \mathbb{E}[V^*(s; g)].
\end{equation}

Rearranging the terms, we have \begin{align}
    p  & \leq \frac{\eps}{\delta} \cdot (- \mathbb{E}[V^*] ).
    \label{eq:qrl-approx-final-lbd}
\end{align}

Combining \Cref{eq:qrl-approx-final-lbd,eq:sol-ubd} gives the desired result.
\end{proof}
\section{Experiment Details and Additional Results}\label{sec:expr-details}

All our results are aggregation from $5$ runs with different seeds.

We first discuss general design details that holds across all settings. For task-specific details, we discuss them in separate subsections below.

\myparagraph{QRL.} Across all experiments, we use $\eps = 0.25$, initialize Lagrange multiplier  $\lambda = 0.01$, and use Adam \citep{kingma2014adam}  to optimize all parameters. $\lambda$ is optimized via a softplus transform to ensure non-negativity. Our latent transition model $T$ is implemented in a residual manner, where \begin{equation}
    T(z, a) \trieq g_\phi(z, a) + z,
\end{equation}
and $g_\phi$ being a generic MLP with weights and biases of the last fully-connected layer initialized to all zeros. Unless otherwise noted, all networks are implemented as simple ReLU MLPs.  $p_\mathsf{state}$ is taken to be the beginning state of a random transition sampled from dataset / replay buffer. Unless otherwise noted, $p_\mathsf{goal}$ is taken to be the resulting state of a random transition sampled from dataset / replay buffer. For maximizing $d_\theta$, unless otherwise noted, we use the strictly monotonically increasing convex function \begin{equation}
    \phi(x) 
    \trieq -\mathrm{softplus}(500 - x, \beta=0.01) 
    = - 100 \times \mathrm{softplus}(5 - \frac{x}{100}).
\end{equation}

\myparagraph{MSG.} We follow the authors'suggestions, use $64$-critics, and tune the two regularizer hyperparameters over $\alpha \in \{0, 0.1, 0.5, 1\}$ and $\beta \in \{-4, -8\}$. For other hyperparameters, we use the same default values used in the original paper \citep{ghasemipour2022so}.

\subsection{Discretized \mntcar} 

\myparagraph{Discretization.} \mntcar state is parametrized by $\textsf{position} \in [-1.2, 0.6]$ and $\textsf{velocity} \in [-0.07, 0.07]$. For a dimension with values in interval $[l, u]$, we consider $160$ evenly spaced bins of length $(u - l) / 159$, with centers being \begin{equation}
    \left\{l + \frac{u-l}{159} \times k \colon k = 0, 1, 2, \dots, 159\right\}.
\end{equation}
After each reset and transition, we discretize each dimension of the state vector, so that future dynamics start from the discretized vector. To discretize a value, we find the bin it falls into, and replace it with the value of bin center. Note that the two bins at the two ends are centered at $u$ and $l$, respectively. So the two ends are exactly represented. Discretizing each dimension this way leads to $160\times160$ discrete states.

\myparagraph{Data.} In \mntcar, the original environment goal (top of hill) is a set of states with $\textsf{position} \in [0.5, 0.6]$ and $\textsf{velocity} \in [0, 0.07]$, where the agent is considered reaching that goal if it reaches any of those states. We adapt QRL and other goal-reaching methods to support this general goal following the procedure outlined in \Cref{sec:more-extensions}. Specifically, we augment the observation space to include an additional indicator dimension, which is $1$ only when representing this general goal. In summary, any original (discretized) state $s\trieq [u, v]$ becomes $\tilde{s}\trieq [u, v, 0]$, and $G \trieq [0.5, 0, 1]$ refers to this general goal. All critics and policies now takes in this augmented $3$-dimensional vector as input. For each encountered state $\tilde{s}$ that falls in this set, a new transition $(\tilde{s}, G)$ is added to the offline dataset.  The dataset includes $240$ such added transitions and $199{,}888$ transitions generated by running a random actor for $1{,}019$ episodes, where each episode terminate when the agent reaches top of hill or times out at $250$ timesteps. 

\myparagraph{Evaluation.}
For each target goal, we evaluate the planning performance starting from each of $160\times160$  states, with a budget of $200$ steps. At each step, the agent receives $-1$ reward until it reaches the goal. The episode return is then averaged over $160\times160$ states to compute the statistics. For the task of planning towards $9$ specific states, we say that agent reaches the goal if it reaches a $13 \times 13$ neighborhood centered around the goal state, and average the metrics over $9$ target goal states.
For QRL and Q-Learning, we did not train any policy network. Instead, the agents take the action that maximizes Q value (or minimizes distance) for simplicity. 

\myparagraph{Goal Distribution.}
For all multi-goal methods, wherever possible, we adopt a goal-sampling distribution as following: for $s_\mathsf{goal} \sim p_\mathsf{goal}$,
\begin{equation}
    s_\mathsf{goal} = \begin{cases}
    \textsf{\small resulting state from a random transition} & \text{ with probability $0.95$}\\
    [0.5, 0, 1] & \text{ with probability $0.05$}.
    \end{cases}\label{eq:dmc-pgoal}
\end{equation}

\myparagraph{QRL.} We use 3-1024-1024-1024-256 network for $f$ and (256+3)-1024-1024-1024-256 residual network for $T$, where $3$ represents the one-hot encoding of $3$ discrete actions. For $d_\theta$, we use a 256-1024-1024-1024-256 projector followed by an IQE-maxmean head with $16$ components, each of size $32$. $\mathcal{L}_\mathsf{transition}$ is optimized with a weight of $75$. Our learning rate is $0.3$ for $\lambda$ and $5\times 10^{-4}$ for the model parameters. We use a batch size of $4096$ to train $5\times10^5$ gradient steps. For all parameters except $\lambda$, we used cosine learning rate scheduling without restarting, decaying to $0$ at the end of training.

\myparagraph{Q-Learning.} We use $x$-1024-1024-1024-1024-1024-1024-3 networks for vanilla Q-Learning, where $x = 3$ in the single-goal setting, and $x=6$ in the multi-goal setting. The $3$ outputs represents estimated Q values for all $3$ actions. 

\myparagraph{Q-Learning with \Qmets. } We use the same encoder and projector architecture as QRL, as well as the same IQE specification. Additionally, to model the Q-function, we also add a 256-1024-1024-1024-(3$\times$256) transition model (which outputs the residual for each of the $3$ actions), and adopt QRL's transition loss with a weight of $5$. In other words, we replace the QRL's value learning objective with the Q-Learning temporal-difference objective (and keep the transition loss).  We use a discount factor of $0.95$, and update the target Q model every $2$ iterations with a exponential moving average factor of $0.005$. We use a learning rate of $0.001$ and a batch size of $4096$ to train $5\times10^5$ gradient steps.

\myparagraph{Contrastive RL.} We mostly follow the author's parameters for their offline experiments, using $x$-1024-1024-1024-$d_z$ encoders, where $x = (3 + 3)$ for the state-action encoder, $x = 3$ for the goal encoder, and $d_z$ is the latent dimension. We tune $d_z \in \{16, 64\}$ and choose $64$ for better performance. The policy training is modified to compute exactly the expected Q-value  (rather than using a reparametrized sample) from the policy's output action distribution, to accommodate the discrete action space. Since the dataset is generated from a random actor policy, we disable the behavior cloning loss. We train over $10^5$ gradient steps using a batch size of $1024$. We note that Contrastive RL requires a specific goal-sampling distribution, which we use instead of $p_\mathsf{goal}$ from \Cref{eq:dmc-pgoal}.

\myparagraph{Contrastive RL with \Qmets.}  We use the same encoder and projector architecture as QRL, as well as the same IQE specification. Similar to Q-Learning, we also add a residual transition model, which uses the same (256+3)-1024-1024-256  architecture as QRL's transition model, and adopt QRL's transition loss with a weight of $5$. In other words, we replace the QRL's value learning objective with the contrastive objective from Contrastive RL (and keep the transition loss). Contrastive RL objective estimates the on-policy Q-function with an extra goal-specific term determined by $p_\mathsf{goal}$ \citep{eysenbach2022contrastive}. Thus, we also learn a 256-1024-1024-1 model $c(z_g)$, where $z_g$ is the latent of goal $g$. Contrastive RL loss is computed with the sum of $c(z_g)$ and \qmet output. Other hyperparameters are identical to the vanilla Contrastive RL choices.

\myparagraph{MSG.} We follow the original paper and tune $\alpha \in \{0, 0.1, 0.5, 1\}$ and $\beta \in \{-4, -8\}$. After tuning, we select $\alpha=0.1$, $\beta=-4$ for both the single-goal and multi-goal setting. For relabelling, we find using random goals hurting performance. Hence, instead of $p_\mathsf{goal}$ from \Cref{eq:dmc-pgoal}, we use $[0.5, 0.5, 1]$ with probability $0.05$, and a future state from the same trajectory with probability $0.95$, where the future state is taken to be $\Delta t \geq 1$ steps away, where $\Delta t \sim \mathrm{Geometric}(0.3)$.

\myparagraph{Diffuser.} Diffuser's training horizon defines the length of trajectory segment used in training. Any trajectory with length shorter than this number won't be sampled at all for training. We tune the training horizon between $16$ (which includes almost all training trajectories) and $200$ (which excludes shorter trajectories from training but may better capture long-term dependencies), and choose $16$ due to its better performance in both evaluations.

\subsection{Offline \dFOURrl \maze}
\myparagraph{Evaluation.}
For each method, we evaluate both single-goal and multi-goal planning over $100$ episodes.

% iqe(2048,n=64)_maxmean/a0.01singleS+_relu_2critics_enc=3x1024_prj=2x1024_zED256_bsz4096dlt1dtol0.25MSE_inf,onx1(12),aDRo,S+(offset=500)_dyn=d10gd1gx1gy1,3x1024_pi=rd1+bc0.05g0.99,4x1024_ep2e5steps_lr0.0005_pilr3e-05_alr0.01adam_s1438

\myparagraph{QRL.} We use 4-1024-1024-1024-256 network for $f$ and (256+2)-1024-1024-1024-256 residual network for $T$, where $2$ is the action dimension. For $d_\theta$, we use a 256-1024-1024-2048 projector followed by an IQE-maxmean head with $64$ components, each of size $32$. $\mathcal{L}_\mathsf{transition}$ is optimized with a weight of $1$. Our learning rate is $0.01$ for $\lambda$,  $5\times 10^{-4}$ for the critic parameters, and $3\times 10^{-5}$ for the policy parameters. We use a batch size of $4096$ to train $2\times10^5$ gradient steps. Inspired by Contrastive RL \citep{eysenbach2022contrastive}, we augment policy training with an additional behavior cloning loss of weight $0.05$ (towards a goal that is $\Delta t \geq 1$ steps in the future from the same trajectory, for $\Delta t \sim \mathrm{Geometric}(0.99)$).

\myparagraph{Contrastive RL.} We mostly follow the author's parameters for their offline experiments, using $x$-1024-1024-1024-16 encoders, where $x = (4 + 2)$ for the state-action encoder, and $x = 4$ for the goal encoder, and $d_z$ is the latent dimension, as well as a behavior cloning loss of weight $0.05$. We train over $1.5 \times 10^5$ gradient steps using a batch size of $1024$. We note that Contrastive RL requires a specific goal-sampling distribution, which we use instead of $p_\mathsf{goal}$ from \Cref{eq:dmc-pgoal}.

\myparagraph{MSG.} For single-goal results, we report the evaluations from the original paper. For multi-goal tasks, we use the same architectures with relabelling, and  tune $\alpha \in \{0, 0.1, 0.5, 1\}$ and $\beta \in \{-4, -8\}$, following the procedure from original paper. After tuning, we use $\alpha=0.1$ and $\beta=-8$ for the \texttt{large} maze, $\alpha=0.5$ and $\beta=-4$ for the \texttt{medium} maze, and $\alpha=0.1$ and $\beta=-8$ for the \texttt{umaze} maze. For relabelling, we sample goal state a future state from the same trajectory, where the future state is taken to be $\Delta t \geq 1$ steps away, where $\Delta t \sim \mathrm{Geometric}(0.3)$.

\myparagraph{MPPI with QRL Value.} We run MPPI in the QRL's learned dynamics and value function with a planning horizon of $5$ steps, $10{,}000$ samples per step, and the QRL Q-function  (via the QRL dynamics and value function) as reward in each step. The noise variance to sample and explore actions is $\sigma^2=1$. We experimented $\lambda \in \{0.1, 0.01\}$, a regularizer penalizing the cost of control noise, and use $\lambda = 0.01$ due to its slightly superior performance.

\myparagraph{Diffuser.} We strictly follow the original paper's parameters for \maze experiments. For planning with sampled actions, each Diffuser sample yields many actions, so we replan after using up all previously sampled actions (similar to open-loop planning). In our experience, replanning at every timestep is extremely computationally costly without observed improvements. For QRL value planning, we guide Diffuser sampling for minimizing the learned \qmet distance towards goal state (in addition to its existing goal-conditioning) with a weight of $0.1$ over $4$ guidance steps at each sampling iteration. Since each Diffuser sample is a long-horizon trajectories refined over many iterations, guiding at each timestep of the trajectory is computationally expensive. Therefore, we gather state-action pairs from every $5$ timesteps as well as the last step of the trajectory, and feed these pairs into learned QRL value function to compute the average QRL values as guidance.

\subsection{Online GCRL}\label{sec:app:gcrl-details}

\begin{figure*}[t]
    \centering
    % \vspace{-3pt}
    % \includegraphics[scale=0.417, trim=5 0 0 0, clip]{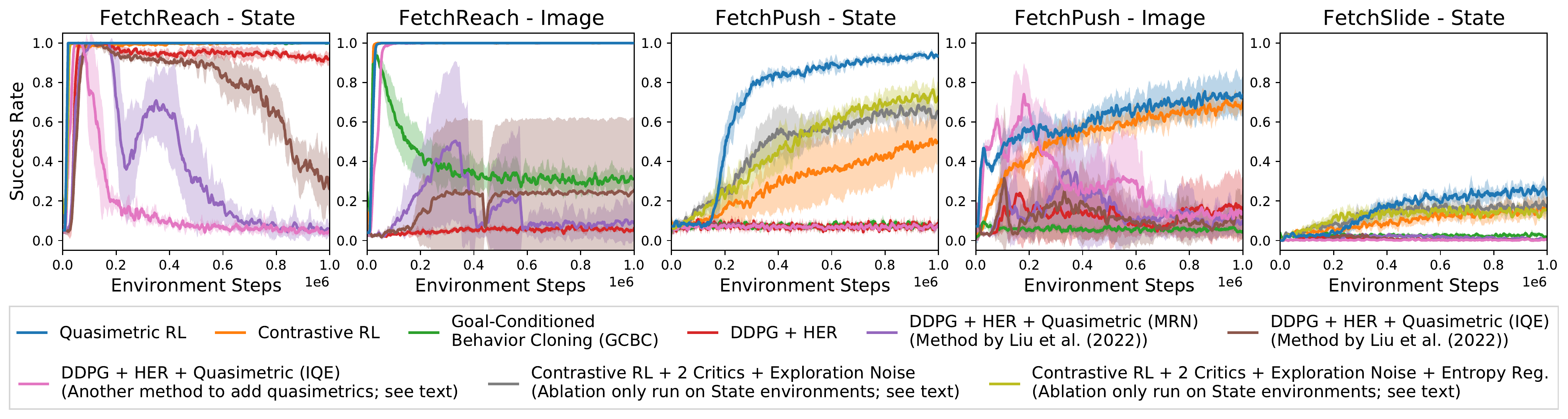}%
    \includegraphics[scale=0.417, trim=5 0 0 0, clip]{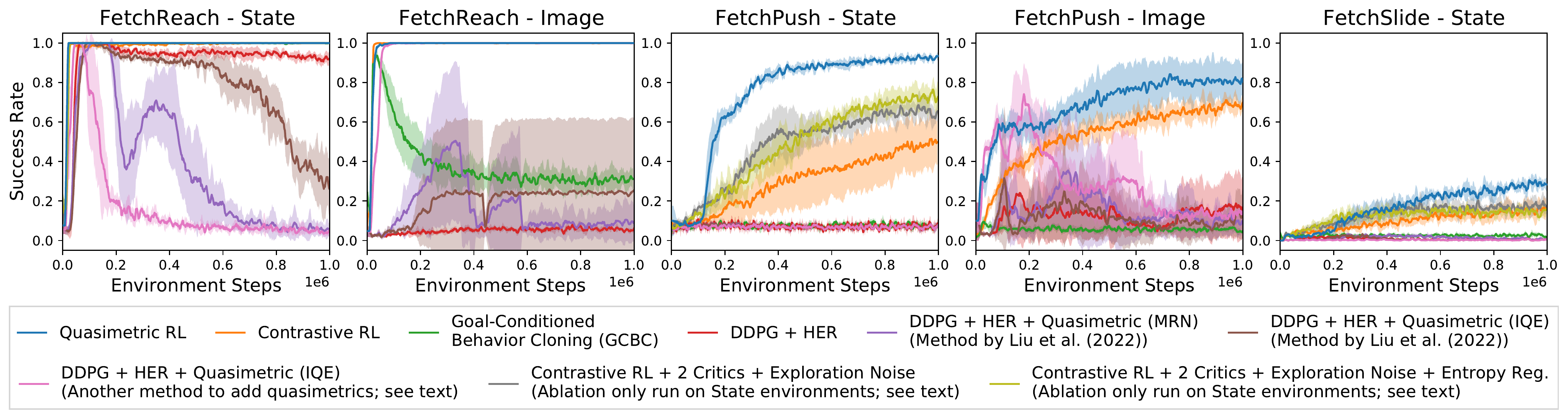}%
    \vspace{-8pt}
    \caption{Online learning performance on GCRL benchmarks, including an alternative method to integrate \qmets in DDPG and a variant of Contrastive RL trained with two critics and exploration action noise on state-based settings. No method has access to ground truth reward function. QRL  still consistently outperforms the baseline methods, learning both faster and better. \texttt{FetchSlide} with image observation is not shown because no method reaches a non-trivial success rate. See \Cref{sec:app:gcrl-details} for details of the additional baselines.
    }
    \label{fig:online-more}%
    \vspace{-8pt}
\end{figure*}

\myparagraph{Environment.} For \texttt{FetchReach} and \texttt{FetchPush}, we strictly follow Contrastive RL experimental setups \citep{eysenbach2022contrastive} to generate initial and goal states/images. The image observations are RGB with $64 \times 64$ resolution. For \texttt{FetchSlide}, we adopt a similar strategy and generate goal states where object position dimensions are set to the target location and other dimensions are set to zeros. We are unable to get any method to reach a non-trivial success rate on \texttt{FetchSlide} with image observation despite tuning hyperparamteres and image rendering. We thus omit this setting in results.

\myparagraph{Evaluation.} We evaluate each method for $50$ episodes every $2000$ environment steps (\ie, $40$ episodes). Following standard practice, we mark an episode as successful if the agent completes the task at any timestep within the time limit ($50$ steps). For clearer visualizations in \Cref{fig:online,fig:online-more}, the success rates curves are smoothed with a sliding window of length $5$ before gathering across $5$ seeds, similar to visualizations in \citep{liu2022metric}. For comparing sample efficiencies between Contrastive RL and QRL, we look at the smoothed success rates from both methods, find the sample size where QRL first exceeds Contrastive RL's final performance at $10^6$ samples, and compute the sample size ratio. 

\myparagraph{Processing Image Observations.} To process image inputs, all compared methods use the same backbone convolutional architecture from \citep{mnih2013playing} to encode the input image into a $1024$-dimensional flat vector. We adopt this approach from Contrastive RL \citep{eysenbach2022contrastive}. For different modules in a method, each module uses an independent copy of this backbone (of same architecture but different set of parameters). For modules that takes in two observations (\eg, policy network in all methods and monolithic Q-functions in vanilla DDPG), the same backbone processes each input into a flat vector, and the concatenated $2048$-dimensional vector is fed into later parts of the module (which is usually an MLP). Other modules only take in a single observation and simply maps the processed $1024$-dimensional vector to the output in a fashion similar to the fully-connected head of convolutional nets (\ie, passing through an activation function and then an MLP). In architecture descriptions below, we omit this backbone part for simplicity, and use $x$ to denote the state dimension for state-based observations and backbone output dimension (\ie, $1024$) for image-based observations .

% iqe(2048,n=64)_maxmean/Expl0.3_a0.01_relu_2critics_enc=2x512_prj=2x2048_zD128_bsz256dlt1dtol
% MSE_inf,onx1(12),aDRo,S+(offset=15,beta=0.1)_dyn=d10gd1gx1gy1,2x512_pi=rd0.5futw0.5g0.99+ent,3x512_steps1000000_lr0.0001_pilr3e-05_alr0.01adam_s

    % steps: 1000000
    % log_steps: 100
    % eval_steps: 2000
    % save_steps: 50000
    % intervals_use_optim_steps: false
    % num_prefill_episodes: 200
    % num_samples_per_cycle: 500
    % num_rollouts_per_cycle: 10
    % num_eval_episodes: 50
\myparagraph{QRL (State-based Observations).} 
We use a $x$-512-512-128 network  for $f$ and a (128+4)-512-512-128 residual network for $T$, where  $4$ is the action dimension. For $d_\theta$, we use a 128-512-2048 projector followed by an IQE-maxmean head with $64$ components, each of size $32$. We use $x$-512-512-8 network for policy, where $x$ is the input size and $8$ parametrizes a tanh-transformed diagonal Normal distribution. $\mathcal{L}_\mathsf{transition}$ is optimized with a weight of $0.1$. Our learning rates are $0.01$ for $\lambda$, $1\times 10^{-4}$ for the model parameters, and $3\times 10^{-5}$ for the policy parameters. We use a batch size of $256$ in training. We prefill the replay buffer with $200$ episodes from a random actor, and then iteratively perform (1) generating $10$ rollouts  and (2) optimizing QRL objective for $500$ gradients steps.  We use $\mathcal{N}(0, 0.3^2)$-perturbed action noise in exploration. For the adaptive entropy regularizer \citep{haarnoja2018soft}, we regularize policy to have target entropy $-\mathrm{dim}(\mathcal{A})$, where the entropy regularizer weight is initialized to be $1$ and optimized in log-space with a learning rate of $3\times 10^{-4}$.  Since the environment has much shorter horizon (each episodes ends at $50$ timesteps), we instead use a different affine-transformed softplus for maximizing $d_\theta$, where $\phi(x) 
    \trieq -\mathrm{softplus}(15 - x, \beta=0.1)$.

\myparagraph{QRL (Image-based Observations).}  All settings are the same as QRL for state-based observations \emph{except}  a few changes: \begin{itemize}[itemsep=-2.5pt,topsep=-4.5pt]
    \item We use the convolutional backbone followed by a $x$-512-128 network for encoder $f$.
    \item We optimize $\mathcal{L}_\mathsf{transition}$ with an increased weight of $10$ (since the dynamics aren't fully deterministic). 
    \item We update the models less frequently with $125$ gradient steps every $10$ rollouts. Contrastive RL uses the same reduced update frequency for image-based observations \citep{eysenbach2022contrastive}, which we observe also has benefits for QRL\footnote{This is potentially related to the lost of capacity phenomenon observed generally in  RL algorithms \citep{doro2022sample}}.
\end{itemize}
\vspace{3pt}

\myparagraph{Contrastive RL.} We strictly follow the original paper's experiment settings \citep{eysenbach2022contrastive}, which does not use two critics or action noise for exploration, and only uses entropy regularizer for image-based observations. For a comparison, we also run Contrastive RL with these techniques added on the state-based environments. As shown in \Cref{fig:online-more}, while they do sometimes improve performance, they do not completely explain the gap between QRL and Contrastive RL. Hence, the improvement of QRL over Contrastive RL indeed (partly) comes from fundamental algorithmic differences. Since Contrastive RL estimates on-policy values, it could be more sensitive adding exploration noises, which degrades the dataset. QRL, however, is conceptually exempt from this issue, since it estimates optimal values.

\myparagraph{Goal-Conditioned Behavior Cloning (GCBC).} We strictly follow the hyperparameter setups for the GCBC baseline in the Contrastive RL paper \citep{eysenbach2022contrastive}.

\myparagraph{DDPG + HER.}
We mostly follow the experiment setup in the MRN paper \citep{liu2022metric}. However, we do not give HER access to reward functions for fair comparison. Instead, HER relabels transition rewards based on whether the state equals the target goal state, which is exactly the same reward structure other method uses (QRL, Contrastive RL and GCBC). 

\myparagraph{DDPG + HER + \Qmets (Method by \citet{liu2022metric}).} We strictly follow the MRN paper \citep{liu2022metric} to modify DDPG to include \qmets, which is slightly different from our modifications to Q-Learning on offline \mntcar, but was also shown to be empirically beneficial in online learning \citep{liu2022metric}. We follow \citet{liu2022metric} for MRN hyperparameters, and use the same IQE hyperparameters as QRL. 

\myparagraph{DDPG + HER + \Qmets (Another method to add \qmets).} We show \emph{additional} results comparing QRL to a different approach to integrate \qmets into DDPG. This approach is different from the one by \citet{liu2022metric} but similar to our modifications to Q-Learning on offline \mntcar that attain good performance in that task.  We adapt the architecture choices by \citet{liu2022metric} and QRL. Specifically, we use a $x$-512-512-128 network for encoder $f$ and  (128+4)-512-512-128 residual network for $T$. For $d_\theta$, we use a 128-2048-2048-2048 projector followed by an IQE-maxmean head with $64$ components, each of size $32$.  We adopt QRL's transition loss with a weight of $5$. In other words, we replace the QRL's value learning objective with the DDPG temporal-difference objective (and keep the transition loss). All other hyperparameters follow the same choices in method by \citet{liu2022metric}. This approach performs extremely poorly on this more challenging set of environments, suggesting that it is unable to scale to more complex continuous-control settings.

As shown in \Cref{fig:online-more}, QRL greatly outperforms both approaches to integrate DDPG and \qmets, showing consistent advantage of the QRL objective over Q-Learning's temporal-difference objective.

\end{document}